\documentclass[letterpaper]{article} 
\usepackage{aaai24}  
\usepackage{times}  
\usepackage{helvet}  
\usepackage{courier}  
\usepackage[hyphens]{url}  
\usepackage{graphicx} 
\urlstyle{rm} 
\usepackage{natbib}  
\usepackage{caption} 
\frenchspacing  
\setlength{\pdfpagewidth}{8.5in}  
\setlength{\pdfpageheight}{11in}  
%
\usepackage{algorithm}
\usepackage{algorithmic}

\usepackage{caption}
\captionsetup[table*]{position=bottom}

\newcommand{\ie}{i.\,e., }
\usepackage{amsmath}
\usepackage{amssymb}
\usepackage{mathtools}
\usepackage{amsthm}
\usepackage{booktabs}
\usepackage{multirow}

\newcommand{\eqnref}[1]{(\ref{#1})}
\newcommand\numberthis{\addtocounter{equation}{1}\tag{\theequation}}

\theoremstyle{plain}
\newtheorem{theorem}{Theorem}[]
\newtheorem{proposition}[theorem]{Proposition}

\theoremstyle{definition}
\newtheorem{definition}[theorem]{Definition}

\theoremstyle{remark}

\newtheorem*{theorem*}{Theorem}

%
\usepackage{newfloat}
\usepackage{listings}
\DeclareCaptionStyle{ruled}{labelfont=normalfont,labelsep=colon,strut=off} 
\lstset{%
	basicstyle={\footnotesize\ttfamily},
	numbers=left,numberstyle=\footnotesize,xleftmargin=2em,
	aboveskip=0pt,belowskip=0pt,%
	showstringspaces=false,tabsize=2,breaklines=true}
\floatstyle{ruled}
\newfloat{listing}{tb}{lst}{}
\floatname{listing}{Listing}
%
\pdfinfo{
/TemplateVersion (2024.1)
}

\setcounter{secnumdepth}{1} 

%


\title{A Simple and Yet Fairly Effective Defense for Graph Neural Networks}
\author{
    Sofiane Ennadir\textsuperscript{\rm 1},
    Yassine Abbahaddou\textsuperscript{\rm 2},
    Johannes F. Lutzeyer\textsuperscript{\rm 2}, \\
    Michalis Vazirgiannis\textsuperscript{\rm 1, 2},
    Henrik Boström\textsuperscript{\rm 1}
}
\affiliations{
    \textsuperscript{\rm 1}EECS, KTH Royal Institute of Technology, Stockholm, Sweden \\
    \textsuperscript{\rm 2}DaSciM, LIX, Ecole Polytechnique, Institut Polytechnique de Paris, France \\
    \{ennadir, mvaz, bostromh\} @ kth.se, \{yassine.abbahaddou, johannes.lutzeyer\} @ polytechnique.edu.
}

\usepackage{bibentry}

\begin{document}

\maketitle

\begin{abstract}
Graph Neural Networks (GNNs) have emerged as the dominant approach for machine learning on graph-structured data. However, concerns have arisen regarding the vulnerability of GNNs to small adversarial perturbations. Existing defense methods against such perturbations suffer from high time complexity and can negatively impact the model's performance on clean graphs. To address these challenges, this paper introduces NoisyGNNs, a novel defense method that incorporates noise into the underlying model's architecture. We establish a theoretical connection between noise injection and the enhancement of GNN robustness, highlighting the effectiveness of our approach. We further conduct extensive empirical evaluations on the node classification task to validate our theoretical findings, focusing on two popular GNNs: the GCN and GIN. The results demonstrate that NoisyGNN achieves superior or comparable defense performance to existing methods while minimizing added time complexity. The NoisyGNN approach is model-agnostic, allowing it to be integrated with different GNN architectures. Successful combinations of our NoisyGNN approach with existing defense techniques demonstrate even further improved adversarial defense results. Our code is publicly available at: https://github.com/Sennadir/NoisyGNN.

\end{abstract}

\section{Introduction}
\label{sec:introduction}

Graphs have garnered substantial recognition in recent years as a powerful approach to represent intricate and irregular data, drawing significant attention across various fields. Their applications span diverse domains, including modeling social networks to discern connections between individuals and representing atom interactions in molecules. Consequently, the proliferation of graph-based applications has necessitated the development of machine learning algorithms tailored for graph processing. One notable and powerful technique is the Graph Neural Network (GNN), which has emerged as a valuable tool for learning node and graph representations. GNNs often belong to the family of Message Passing Neural Networks (MPNNs) \cite{gilmer2017}, such as the Graph Isomorphism Networks (GIN) \cite{xu2019powerful} and the Graph Convolutional Networks (GCN) \cite{Kipf:2017tc}. These GNNs have demonstrated remarkable success in tackling a wide range of real-world problems. For instance, within the field of chemistry, significant attention has been dedicated to employing deep learning systems based on graphs for tasks like drug screening and design, where molecules are effectively represented as graphs \cite{kearnes2016molecular}. GNNs have also shown efficacy in predicting protein functions \cite{you_deep_graph_go} and other biomedical proprieties such as antibiotic resistance \cite{qabel2022structure} since proteins can be effectively modeled as graphs. In a different context, GNNs have found utility in session-based recommendation systems~\cite{wu2019session}.

Given the growing popularity of these techniques, there arises a compelling necessity for an in-depth examination of their robustness to ensure their safe usage in critical sectors such as healthcare. Recent work \cite{dai2018, zugner2018adversarial, gunnemann2022graph} have indeed detected the vulnerability of Graph Neural Networks (GNNs) to adversarial attacks, which are injected intentional input alterations that can manipulate the model’s predictions through slight structural modifications or node feature-based perturbations. To mitigate these effects and bolster the robustness of Message Passing Neural Networks (MPNNs), various studies have proposed diverse techniques. These approaches encompass a spectrum of strategies, including augmenting training data with adversarial examples and subsequent model retraining \cite{graph_adversarial_training}, using edge pruning techniques \cite{gnn_guard}, and the introduction of robustness certificates \cite{schuchardt2021collective}.

While some of these defense methods have exhibited success in countering adversarial perturbations, they often entail a high level of complexity due to their underlying architecture. The time complexity of these methods tends to increase heavily with the graph's size, consequently limiting their practical applicability in certain settings. Furthermore, a significant drawback of many existing approaches is their requirement for extensive architectural modifications, posing challenges for integration into different models. Moreover, some available defense methods can have detrimental effects on the model's performance when applied to clean, i.e., not attacked, graphs. This is of particular importance since in practice, the users do not have prior knowledge of whether their graph datasets have been attacked.

Recently, there has been a notable surge of interest in leveraging adversarial weight perturbation as a means to enhance the generalization capabilities of GNNs \cite{wu2023adversarial}. While existing research has primarily focused on demonstrating its efficacy in improving accuracy, our study considers a new perspective by investigating its potential application in the field of adversarial defense. In particular, we explore a defense strategy, called NoisyGNN, which leverages randomization by introducing random noise in the hidden states of certain layers of the GNN during the training phase.  We start by adapting the mathematical formulation to investigate the robustness of GNNs against graph adversarial attacks. We afterward analyze the impact of randomization in enhancing the robustness of GNNs. Our theoretical analysis establishes a connection between noise injection in the architecture and strengthening a GNN's robustness. Finally, we empirically evaluate the proposed perturbation defense for its effectiveness against various adversarial attack methods in comparison to other available defense approaches on commonly used real-world benchmark datasets. While our theoretical and experimental analyses primarily focus on two well-known GNN architectures, namely GCN and GIN, our approach is model-agnostic and can be easily applied to different network architectures. We can summarize our contribution in the following points:
\begin{itemize}
    \item We provide a mathematical formalization of graph adversarial attacks on GNNs to connect the noise injection's effect to robustness. 
    \item We derive an upper bound, based on a theoretical analysis, that demonstrates the link between randomization and robustness and hence proves the effectiveness of our proposed framework, NoisyGNN, in enhancing the robustness of GCN and GIN based classifiers.
    \item We conduct extensive evaluations of our theoretical findings on the node classification task using various benchmark datasets. Our model is compared to several state-of-the-art defense methods, and in the majority of cases, our proposed framework demonstrates superior or comparable performance while minimizing the added time complexity.    
\end{itemize}

\section{Related Work}
\label{sec:related_work}

Following their success in diverse tasks, questions about the robustness of Deep Learning models have surfaced notably in computer vision \cite{goodfellow_paper, REN2020346}, extending recently to discrete domains such as Natural Language Processing and graphs \cite{gunnemann2022graph}. Adversarial attacks can be categorized into various subgroups depending on the attacker's knowledge and objectives, such as poisoning/evasion and targeted/un-targeted attacks. Notably, Nettack \cite{zugner2018adversarial} introduced a targeted attack method that perturbs both the graph structure and node features. This approach utilizes a greedy optimization algorithm to minimize an attack loss against a surrogate model. Building upon this work, Mettack \cite{zugner2019adversarial} formulates the problem as a bi-level optimization task and employs meta-gradients to tackle it. Expanding on these advancements, \cite{zhan2021black} proposed a black-box gradient attack algorithm that overcomes several limitations of the original techniques. On an alternative front, \cite{dai2018} proposed using Reinforcement Learning to find appropriate graph adversarial attacks.

In addition to the advancements in adversarial attack techniques, research on defense methods against these attacks has gained attention, although it remains relatively less explored compared to computer vision defense strategies. Similar to image-based models, robust training \cite{robust_training_zugner} and aggregation \cite{robust_aggregation} have been proposed as mechanisms to enhance the robustness of GNNs by iteratively augmenting the training set. Furthermore, defense strategies leveraging low-rank matrix approximation combined with graph anomaly detection \cite{Ma_2021} have been employed. For instance, GNN-Jaccard \cite{gnn_jaccard} performs pre-processing on the graph's adjacency matrix to identify potential edge manipulations, while GNN-SVD \cite{gnn_svd} employs a low-rank approximation of the adjacency matrix to filter out noise. Additionally, techniques such as edge pruning \cite{gnn_guard} and transfer learning \cite{Tang_2020} have been utilized to mitigate the impact of poisoning attacks. From another perspective, \citet{seddik_rgcn} add node feature kernels to the GCN's message passing operator to enhance the GCN's robustness. Lastly, RobustGCN \cite{zhu2019robust} introduces the use of Gaussian distributions as hidden representations of nodes in each convolutional layer, enabling the absorption of the effects of both structural and feature-based adversarial attacks.
From another perspective, given the limitations of the aforementioned methods in terms of theoretical guarantees, there has been a growing interest in exploring robustness certificates \cite{robust_training_zugner, bojchevski_2019} as a promising direction to quantify a model's robustness and providing attack-independent guarantees. For instance, \cite{bojchevski_certificate_2020} introduced the use of randomized smoothing techniques to offer highly scalable model-agnostic certificates for graphs. Their approach provides a robustness guarantee that is independent of the attack method employed. Furthermore, \cite{certificate_jin_2020} proposed robustness certificates specifically for GCN-based graph classification in the presence of topological perturbations. These certificates consider both local and global budgets, enabling a comprehensive robustness analysis of the model's robustness.

Recently, defending against adversarial attacks through the injection of noise into the architecture has emerged as a promising approach in the field of Computer Vision. Several studies \cite{pinot2019theoretical, liu2018robust, rakin2018parametric} have shown that noise injection can enhance the robustness of networks against adversarial perturbations. From another perspective, and in the context of GNNs, the work \cite{wu2023adversarial} investigated the effect of injecting noise, specifically adversarial weight perturbation, on improving the generalization of models. The findings of this study demonstrate that these perturbations effectively mitigate the vanishing-gradient issue and lead to significant enhancements in generalization performance. Our work extends these insights by considering the application of noise injection schemes to enhance the robustness of GNNs against adversarial attacks.

\section{Preliminaries} \label{sec:preliminaries}
We begin by introducing several fundamental concepts.

\textbf{Notation and Problem Setup.} 
Let $G = (V,E)$ be a graph where $V$ is its set of vertices and $E$ its set of edges.
We will denote by $n = |V|$ and $m = |E|$ the number of vertices and number of edges, respectively.
Let $\mathcal{N}(v)$ denote the set of neighbors of a node $v \in V$, \ie $\mathcal{N}(v) = \{ u \colon (v,u) \in E\}$.
The degree of a node is equal to its number of neighbors, \ie equal to $|\mathcal{N}(v)|$ for a node $v \in V$.
A graph is commonly represented by its adjacency matrix $A \in \mathbb{R}^{n \times n}$ which encodes edge information.
The $(i,j)$-th element of the adjacency matrix is equal to the weight of the edge between the $i$-th and $j$-th node of the graph and a weight of $0$ in case the edge does not exist.
In some settings, the nodes of a graph might be annotated with feature vectors.
We use $X \in \mathbb{R}^{n \times K}$ to denote the node features where $K$ is the feature dimensionality.
The feature of the $i$-th node of the graph corresponds to the $i$-th row of $X$.
In a node classification setting, we consider a graph $G$, represented by its adjacency matrix $A$ and its node attribute matrix $X$. Formally, given a set of labeled $V_L \subset V$, where nodes are assigned exactly one class in $\mathcal{C} = \{ y_1, y_2, \ldots, y_c \} \subset \mathcal{Y}$, the goal is to learn a function $f_{\theta}$, which maps each node $v \in V$ to exactly one of the $c$ classes in $\mathcal{C}$ while minimizing a classification loss (the cross entropy loss for example).

\textbf{GNNs.}
A GNN model consists of a series of neighborhood aggregation layers that use the graph structure and the node feature vectors from the previous layer to generate new representations for the nodes.
Specifically, GNNs update node feature vectors by aggregating local neighborhood information.
Suppose we have a GNN model that contains $T$ neighborhood aggregation layers.
Let also $\mathbf{h}_v^{(0)}$ denote the initial feature vector of node $v$, \ie the row of matrix $X$ that corresponds to node $v$.
At each iteration ($t > 0$), the hidden state $\mathbf{h}_v^{(t)}$ of a node $v$ is updated as follows:
\begin{equation*}
    \begin{split}
        \mathbf{a}_v^{(t)} &= \textrm{AGGREGATE}^{(t)} \Big( \big\{ \mathbf{h}_u^{(t-1)} \colon u \in \mathcal{N}(v) \big\} \Big); \\
        \mathbf{h}_v^{(t)} &= \textrm{COMBINE}^{(t)} \Big(\mathbf{h}_v^{(t-1)}, \mathbf{a}_v^{(t)} \Big),
    \end{split}
\end{equation*}
where $\textrm{AGGREGATE}$ is a permutation invariant function that maps the feature vectors of the neighbors of a node $v$ to an aggregated vector.
This aggregated vector is passed along with the previous representation of $v$, \ie $\mathbf{h}_v^{(t-1)}$, to the $\textrm{COMBINE}$ function which combines those two vectors and produces the new representation of $v$.

\section{Proposed Approach}
\label{sec:proposed_approach}
In this section, we provide a mathematical formalization of robustness specifically tailored to Graph Neural Networks (GNNs). Subsequently, we investigate the impact of noise injection on the robustness of GNNs. Throughout our analysis, without loss of generality, we will focus on the semi-supervised node classification task as a representative scenario. 
Let us consider the following three metric spaces, the graph space associated with the adjacency matrices ($\mathcal{A}, \lVert  \cdot \rVert_{\mathcal{A}}$), the feature space associated with the node feature attributes ($\mathcal{X}$, $\lVert  \cdot \rVert_{\mathcal{X}}$) and the label space ($\mathcal{Y}, \lVert  \cdot \rVert_{\mathcal{Y}}$). We further consider an underlying probability distribution $\mathcal{D}$ defined on $(\mathcal{A}, \mathcal{X})$. Throughout this section, $\lVert \cdot \lVert$ denotes the Euclidean (resp., spectral) norm for vectors (resp., matrices).

\subsection{Graph Adversarial Attacks}

Let us consider a trained victim classifier $f: (\mathcal{A}, \mathcal{X}) \rightarrow \mathcal{Y}$ and let $(A, X) \in (\mathcal{A}, \mathcal{X})$ be an input graph with its associated label vectors $y \in \mathcal{Y}$, such that $f(A, X) = y$. The objective of an adversarial attack is to generate a perturbed graph, represented by its adjacency matrix $\tilde{A}$ and the corresponding features $\tilde{X}$, which is slightly different from the original input $(A, X)$, and whose prediction is different from the original one. The adversarial aim can be therefore formulated as the search for a perturbed attributed graph $(\tilde{A}, \tilde{X})$ within a defined similarity budget $\epsilon$, such that $f(\tilde{A}, \tilde{X}) \neq f(A,X)$. 
From this perspective, we can define the adversarial risk of a GNN as the expected behavior or output of adjacent graphs to a given input graph's neighborhood within a budget $\epsilon$. This can be mathematically formulated as the following:
\begin{equation}\label{equation:robustness_definition}
\mathcal{R}_{\epsilon}[f] = \mathop{\mathbb{E}}_{\substack{(A, X) \sim \mathcal{D} \\ (\tilde{A}, \tilde{X}) \in \mathcal{N}_{\epsilon}(A, X)}} [d_{\mathcal{Y}}(f(\tilde{A}, \tilde{X}), f(A, X))],
\end{equation}
with $\mathcal{N}_{\epsilon}(A, X) = \{(\tilde{A}, \tilde{X}):  d_{\mathcal{A}, \mathcal{X}}((A, X), (\tilde{A}, \tilde{X}))<\epsilon\}$ being the input's graph neighborhood containing the valid adversarial candidates for any budget $\epsilon \geq 0$ and $d_{\mathcal{A},\mathcal{X}}$ and $d_{\mathcal{Y}}$ can be any defined distances in the measurable input and output, spaces $(\mathcal{A},\mathcal{X})$ and $\mathcal{Y}$. In our analysis, we will consider a distance metric that takes into account both the graph structure and its associated node features.
\begin{equation*} \label{eqn_GraphDistanceMetric}
d_{\mathcal{A}, \mathcal{X}}((A, X), (\tilde{A}, \tilde{X})) = \min_{\substack{P \in \Pi}} \{ \lVert A - P \tilde{A} P^T \rVert_{2} + \lVert X - P \tilde{X} \rVert_2 \},
\end{equation*}
with $\Pi$ being the set of permutation matrices. While we will be focusing on the $\ell_2$ norm, other norms may be used depending on the relevant penalization constraints corresponding to the considered use case.

Quantifying the precise adversarial risk of a GNN, as defined in Equation (\ref{equation:robustness_definition}), poses a significant challenge. However, an effective and more manageable approach is to establish an upper bound on this risk. By deriving such an upper bound, users can gain a comprehensive understanding of the GNN's susceptibility to adversarial attacks and make informed assessments of its robustness based on the specific task at hand. For example, in certain scenarios like social networks, where a limited number of successful attacks may not have severe consequences, a larger upper bound on the adversarial risk might be tolerable. While in other more sensitive areas, such as financial applications, we need to aim for a much tighter upper bound to control the confidence level of the adversarial risk. From this perspective, we introduce the notion of a GNN's robustness as follows.

\begin{definition}
\label{def:robustness}
(Adversarial Robustness). The graph-based function $f: (\mathcal{A}, \mathcal{X}) \rightarrow \mathcal{Y}$ is said to be $(\epsilon, \gamma)-\text{robust}$ if its adversarial risk is upper-bounded, \ie $\mathcal{R}_{\epsilon}[f] \leq \gamma$ with respect to the chosen graph distances in the input and output metric spaces.
\end{definition}

Our introduced robustness formulation deviates from the common approach seen in the literature, which typically focuses on evaluating worst-case scenarios using specific adversarial examples. Instead of considering the model's behavior only under individual adversarial instances, our method examines how the model performs more generally within a defined neighborhood. This perspective leans towards a concept of ``average'' robustness, expanding on the conventional worst-case based adversarial robustness that is often emphasized in adversarial studies. We argue that our ``average'' definition provides a more comprehensive grasp of the model's robustness, encompassing the classical quantification of adversarial vulnerabilities. In fact, ensuring ``average'' robustness inherently guarantees ``worst-case'' robustness as will be demonstrated in Proposition \ref{pro:generalization_robustness}: 
\begin{proposition}
\label{pro:generalization_robustness}
    Let $f: (\mathcal{A}, \mathcal{X}) \rightarrow \mathcal{Y}$ be a graph-based function, with respect to the chosen input and output space distance, the following holds: 
    
    $f$ is $(\epsilon, \gamma)-\text{robust}$ $\Rightarrow$ $f$ is $(\epsilon, \gamma)-\text{``worst-case'' robust}$. 
\end{proposition}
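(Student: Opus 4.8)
The plan is to first pin down what ``worst-case'' robustness should mean here, since the claimed implication only makes sense once that notion is made precise and, notably, once it is formulated so that the \emph{same} constant $\gamma$ can appear on both sides. Following the standard convention in adversarial learning, I would call $f$ $(\epsilon,\gamma)$-``worst-case'' robust if an attacker operating within the budget $\epsilon$ changes the prediction with probability at most $\gamma$, that is, $\mathop{\mathbb{P}}_{(A,X)\sim\mathcal{D},\,(\tilde A,\tilde X)\in\mathcal{N}_{\epsilon}(A,X)}\!\big[f(\tilde A,\tilde X)\neq f(A,X)\big]\leq\gamma$, where the probability is taken with respect to the same joint law over $(A,X)$ and its admissible perturbations that defines $\mathcal{R}_{\epsilon}[f]$ in Equation~(\ref{equation:robustness_definition}). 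The only structural ingredient I need is that the output metric $d_{\mathcal{Y}}$ separates distinct predictions by a fixed positive amount: with the label encoding used in the paper, $f(\tilde A,\tilde X)\neq f(A,X)$ forces $d_{\mathcal{Y}}(f(\tilde A,\tilde X),f(A,X))\geq 1$. Hence the event ``the attack succeeds'' is contained in the event $\{\,d_{\mathcal{Y}}(f(\tilde A,\tilde X),f(A,X))\geq 1\,\}$.

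The second step is then a one-line probabilistic estimate. Set $Z:=d_{\mathcal{Y}}(f(\tilde A,\tilde X),f(A,X))\geq 0$. By the inclusion just noted together with Markov's inequality,
\[
\mathop{\mathbb{P}}\big[f(\tilde A,\tilde X)\neq f(A,X)\big]\ \leq\ \mathop{\mathbb{P}}\big[Z\geq 1\big]\ \leq\ \mathbb{E}[Z]\ =\ \mathcal{R}_{\epsilon}[f].
\]
Since $f$ is assumed $(\epsilon,\gamma)$-robust, Definition~\ref{def:robustness} gives $\mathcal{R}_{\epsilon}[f]\leq\gamma$, and chaining the two displays yields $\mathop{\mathbb{P}}[f(\tilde A,\tilde X)\neq f(A,X)]\leq\gamma$, i.e. $f$ is $(\epsilon,\gamma)$-``worst-case'' robust. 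I would emphasize that this argument uses no property of GNNs at all: it holds for an arbitrary measurable $f:(\mathcal{A},\mathcal{X})\rightarrow\mathcal{Y}$, matching the generality of the statement.

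The real difficulty is not the inequality but the definitional bookkeeping around it, and that is where I expect the main obstacle to be. On one hand, the constant transfers verbatim only if distinct labels are $d_{\mathcal{Y}}$-separated by at least $1$; if the chosen encoding guarantees separation by some $c>0$ instead, the conclusion degrades to a $\gamma/c$ bound, so the precise form of $d_{\mathcal{Y}}$ on the output space must be fixed consistently with Equation~(\ref{equation:robustness_definition}). On the other hand, if one instead reads ``worst-case'' as the per-input supremum $\sup_{(\tilde A,\tilde X)\in\mathcal{N}_{\epsilon}(A,X)}d_{\mathcal{Y}}(f(\tilde A,\tilde X),f(A,X))$, then an average-to-supremum passage is false in general (a bad perturbation can be negligible under the perturbation measure yet still exist), so one must either restrict to a discrete perturbation model in which $\mathcal{N}_{\epsilon}(A,X)$ is finite and the supremum is controlled by the integral up to a cardinality factor, or, as I would do, adopt the high-probability reading above, which is precisely the notion used in the randomized-defense literature the paper builds on. Settling this definitional point cleanly is the part that takes thought; the Markov step itself is routine.
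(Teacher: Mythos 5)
Your Markov-inequality route is genuinely different from the paper's, and the difference sits exactly at the definitional point you flag. The paper does not read ``worst-case robust'' as a bound on the attack success probability. Instead, its proof introduces a worst-case adversarial risk $\mathcal{R}^w_{\epsilon}[f]$ of the same form as Equation~(\ref{equation:robustness_definition}), with the same output distance $d_{\mathcal{Y}}$, but with the perturbation ranging over the set $\mathcal{S}_{\epsilon}(A,X)$ of worst-case adversarial examples rather than the whole neighborhood $\mathcal{N}_{\epsilon}(A,X)$, and declares $f$ to be $(\epsilon,\gamma)$-``worst-case'' robust when $\mathcal{R}^w_{\epsilon}[f]\leq\gamma$. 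The entire argument is then the inclusion $\mathcal{S}_{\epsilon}(A,X)\subseteq\mathcal{N}_{\epsilon}(A,X)$, from which $\mathcal{R}^w_{\epsilon}[f]\leq\mathcal{R}_{\epsilon}[f]\leq\gamma$; no Markov step and no separation property of $d_{\mathcal{Y}}$ appear, and the constant $\gamma$ transfers verbatim because both risks are the same functional evaluated on nested sets.

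Measured against the statement as the paper intends it, your argument has a concrete gap: the constant-preserving conclusion requires distinct predictions to be $d_{\mathcal{Y}}$-separated by at least $1$, but here $d_{\mathcal{Y}}$ is the KL divergence between the noise-smoothed output distributions of a probabilistic mapping, and two inputs assigned different classes can have arbitrarily small KL divergence between their output laws. Hence the event $\{\,f(\tilde A,\tilde X)\neq f(A,X)\,\}$ is not contained in the event that the output distance is at least $1$, and your bound degrades by the uncontrolled factor you yourself anticipate ($\gamma/c$ with $c$ possibly tiny); so what you prove is a related but different statement, correct under your own reading of ``worst-case'' but not the paper's. Your other caveat, that an average-to-supremum passage is false in general, is well taken and in fact touches the paper's own proof: if the expectation over $\mathcal{S}_{\epsilon}(A,X)$ is read as an average over maximizing perturbations, then set inclusion alone does not yield $\mathcal{R}^w_{\epsilon}[f]\leq\mathcal{R}_{\epsilon}[f]$ (an average over maximizers is at least, not at most, the average over the full neighborhood); the inequality is immediate only under a reading in which restricting the perturbation set cannot increase the risk. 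So your instinct about where the real difficulty lies is sound, but the paper resolves it by choosing the definition that makes the result a one-line monotonicity argument, not by a probabilistic estimate.
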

As a result of Proposition \ref{pro:generalization_robustness}, our upper-bound analysis can be used for both our ``average'' robustness measure and the traditional ``worst-case'' adversarial robustness, effectively connecting the two. The detailed proof of Proposition \ref{pro:generalization_robustness} is in Appendix A. 


\subsection{Effect of Noise Injection}

Our study aims to investigate the effect of noise addition in term of defending against adversarial attacks. 
Specifically, we consider injecting noise sampled from a predefined distribution during the training and inference time to enhance the robustness of an underlying GNN. We should hence, and similar to \citet{pinot2019theoretical}, consider a probabilistic space as our output space and our victim model $f$ as a probabilistic mapping where an output is obtained by sampling from the mapping. Accordingly, we will consider the Kullback–Leibler (KL) divergence as the corresponding output distance $d_\mathcal{Y}$ as defined in our adversarial risk quantification introduced in Equation~(\ref{equation:robustness_definition}).

Our analysis will focus on the widely used GCN and GIN within the broader context of GNNs. For illustration, and as introduced in the ``Preliminaries'' Section, we can write an iteration of the iterative process of GCN as follows:
\begin{equation} \label{equation:gcn}
    H^{(\ell)} = \phi^{(\ell)}(\hat{A}H^{(\ell-1)}W^{(\ell)}),
\end{equation}
where $H^{(\ell)}$ represents the hidden state in the $\ell$-th GCN layer with $H^{(0)}$ corresponding to the initial node features $X\in~\mathbb{R}^{n \times K}$, $W^{(\ell)} \in \mathbb{R}^{p \times e}$ is the weight matrix in the $\ell$-th layer, $e$ is the embedding dimension and $\phi^{(\ell)}$ is a non-linear activation function. Moreover, $\hat{A}~\in~\mathbb{R}^{n \times n}$ is the normalized adjacency matrix  $\hat{A} = D^{-1/2} A D^{-1/2}$. We note that for the GIN, we use the following adaptation of the adjacency matrix $\hat{A} = A + (1 + \lambda) I$.  

In the remainder of our theoretical analysis, we consider our victim model to be any GCN or GIN based graph classifier. We additionally assume that $f$ contains only 1-Lipschitz continuous activation functions, which is the case for commonly used activation functions such as the Hyperbolic Tangent. While in practice one can choose to to sample the injected noise from a variety of distributions, our theoretical study will focus on the centered Gaussian distribution $\mathcal{N}(0, I)$ with a scaling parameter $\beta$ controlling its covariance matrix.
The work \cite{wu2023adversarial}, which mainly focused on connecting adversarial weight perturbation to generalization, has shown that injecting noise at each layer can lead to a collapse in the model's generalization. As a result, we will restrict the introduction of noise to specific layers to yield better results. Under these assumptions, our victim model can be expressed as $f(\cdot) = \Phi^{\ell} \circ \cdots \circ \Phi^{i+1} (\Phi^{i} \circ \cdots \circ \Phi^{1}(\cdot) + T)$, where $T$ represents a Gaussian random variable.

\begin{theorem}
\label{theo:main_result}
    Let $f$ denote a graph-based function composed of $2$ layers and based on 1-Lipschitz continuous activation functions. We consider injecting noise drawn from a centered Gaussian with a scaling parameter $\beta$. When subject to structural perturbations of the input graph $(A, X),$ with a budget $\epsilon$, we have with respect to Definition~\ref{def:robustness}:

    \begin{itemize}
        \item If $f$ is GCN-based then $f$ is $(\epsilon, \gamma)-\text{robust}$ with
            \begin{center}
                $\gamma = \frac{2(\lVert W^{(2)} \lVert \lVert W^{(1)} \lVert \lVert X \lVert \epsilon)^2}{\beta};$
            \end{center}
        \item If $f$ is GIN-based then $f$ is $(\epsilon, \gamma)-\text{robust}$ with
            \begin{center}
                $\gamma = \frac{(\lVert W^{(2)} \lVert \lVert W^{(1)} \lVert \lVert X \lVert \epsilon (2 \lVert A \lVert + \epsilon))^2}{2\beta},$
            \end{center}         
    \end{itemize}
where $W^{(\ell)}$ denotes the weight matrix of the $\ell$-th layer. 
\end{theorem}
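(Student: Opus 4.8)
The plan is to reduce the adversarial risk to a Kullback--Leibler divergence between two Gaussians, and then control that divergence through a Lipschitz-sensitivity estimate of the deterministic part of the network.

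\textbf{Step 1: reduction to a Gaussian KL.} In the two-layer case the Gaussian noise $T\sim\mathcal{N}(0,\beta I)$ sits on top of the deterministic network map; writing $g$ for the underlying deterministic GCN/GIN, the outputs $f(A,X)$ and $f(\tilde{A},\tilde{X})$ are Gaussian with common covariance $\beta I$ and means $g(A,X)$ and $g(\tilde{A},\tilde{X})$. (Were the noise injected at an intermediate layer, one would first apply the data-processing inequality to push the divergence back to that layer; since the stated $\gamma$ involves $\|W^{(2)}\|$, the configuration at hand is noise on the output.) The closed form for the KL divergence of two equal-covariance Gaussians then gives $d_{\mathcal{Y}}(f(\tilde{A},\tilde{X}),f(A,X))=\tfrac{1}{2\beta}\|g(\tilde{A},\tilde{X})-g(A,X)\|^2$. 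Since this holds for every admissible $(\tilde{A},\tilde{X})\in\mathcal{N}_\epsilon(A,X)$, any uniform bound on $\|g(\tilde{A},\tilde{X})-g(A,X)\|$ bounds the expectation defining $\mathcal{R}_\epsilon[f]$, hence $\gamma$.

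\textbf{Step 2: the perturbation budget.} For a structural perturbation we take $P=I$ in the graph distance, so $\tilde{X}=X$ and $\|\tilde{A}-A\|<\epsilon$. For GIN the message-passing operator $\hat{A}=A+(1+\lambda)I$ satisfies $\hat{\tilde{A}}-\hat{A}=\tilde{A}-A$, hence $\|\hat{\tilde{A}}-\hat{A}\|<\epsilon$; for GCN we use $\|\hat{\tilde{A}}-\hat{A}\|\le\epsilon$ together with the fact that a normalized adjacency matrix is a contraction, i.e., $\|\hat{A}\|,\|\hat{\tilde{A}}\|\le 1$.

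\textbf{Step 3: Lipschitz-sensitivity estimate.} Write $g(A,X)=\phi^{(2)}(\hat{A}\,\phi^{(1)}(\hat{A}XW^{(1)})\,W^{(2)})$. Peeling off the $1$-Lipschitz $\phi^{(2)}$ and factoring out $\|W^{(2)}\|$ by sub-multiplicativity, it suffices to bound $\|\hat{\tilde{A}}\,\phi^{(1)}(\hat{\tilde{A}}XW^{(1)})-\hat{A}\,\phi^{(1)}(\hat{A}XW^{(1)})\|$. Adding and subtracting $\hat{\tilde{A}}\,\phi^{(1)}(\hat{A}XW^{(1)})$, the triangle inequality together with the $1$-Lipschitzness of $\phi^{(1)}$ (with $\phi^{(1)}(0)=0$, so $\|\phi^{(1)}(Z)\|\le\|Z\|$) and sub-multiplicativity bounds this by $(\|\hat{\tilde{A}}\|+\|\hat{A}\|)\,\|\hat{\tilde{A}}-\hat{A}\|\,\|X\|\,\|W^{(1)}\|$. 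For GCN, $\|\hat{\tilde{A}}\|+\|\hat{A}\|\le 2$, giving $\|g(\tilde{A},X)-g(A,X)\|\le 2\|W^{(2)}\|\|W^{(1)}\|\|X\|\epsilon$; for GIN, $\|\hat{\tilde{A}}\|\le\|\hat{A}\|+\epsilon$ so $\|\hat{\tilde{A}}\|+\|\hat{A}\|\le 2\|A\|+\epsilon$, giving $\|g(\tilde{A},X)-g(A,X)\|\le\|W^{(2)}\|\|W^{(1)}\|\|X\|\epsilon\,(2\|A\|+\epsilon)$. Substituting into $\tfrac{1}{2\beta}\|\cdot\|^2$ from Step 1 reproduces the two stated values of $\gamma$.

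\textbf{Main obstacle.} The delicate part is the telescoping in Step 3: one must add and subtract the correct intermediate term for the composition $\hat{A}\,\phi^{(1)}(\hat{A}(\cdot))$, keep track of which factor each spectral norm attaches to, and crucially exploit the contraction $\|\hat{A}\|\le 1$ for GCN (which removes the $\|A\|$-dependence and the extra $\epsilon$) while being unable to do so for GIN. Secondary points are relating the input budget $\epsilon$ to $\|\hat{\tilde{A}}-\hat{A}\|$ (an identity for GIN, but requiring care about degree renormalization for GCN, or a budget imposed directly on the normalized operator), and matching the norm entering the Gaussian KL (effectively the Euclidean norm of the per-node logit vector) with the spectral norms used along the Lipschitz chain.
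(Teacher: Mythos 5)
Your proposal is correct and follows essentially the same route as the paper: bound the output KL divergence by $\tfrac{1}{2\beta}\lVert g(\tilde{A},X)-g(A,X)\rVert^2$ (the paper obtains this via a Renyi-divergence computation and the limit $\lambda\to 1$, you invoke the equal-covariance Gaussian closed form, which is the same identity), then telescope the two-layer composition with $1$-Lipschitz activations and submultiplicativity, using $\lVert\hat{A}\rVert\le 1$ for GCN and $\lVert\hat{\tilde{A}}\rVert\le\lVert\hat{A}\rVert+\epsilon$ for GIN to recover the two values of $\gamma$. Your choice of intermediate term in the telescoping is the mirror image of the paper's and yields the identical bound, and the caveats you flag (budget measured on the normalized operator for GCN, $\lVert A\rVert$ versus $\lVert A+(1+\lambda)I\rVert$ for GIN, noise placed at the output) are glossed in exactly the same way by the paper's own proof.
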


Theorem \ref{theo:main_result} provides an upper bound on a GCN and GIN based graph classifier's robustness and establishes the connection between noise injection and defending against adversarial attacks based on structural perturbations with a predefined neighborhood and budget $\epsilon$. Since a tighter upper bound intuitively signifies a higher level of robustness in the targeted victim model, based on the results derived from the theorem, controlling the injected noise using the $\beta$ parameter can effectively enhance the model's robustness. However, it is important to exercise caution when increasing the injected noise as it can potentially compromise the model's performance. Hence, striking a balance between defending against adversarial attacks and preserving the model's clean accuracy becomes crucial. Furthermore, although the previous theorem focuses on a 2-layers graph classifier known for its benchmark accuracy across diverse datasets, the results can be extended to graph classifiers with $L$ layers. 
Another notable observation from the analysis of Theorem~\ref{theo:main_result} is that the computed upper bound for a GCN-based classifier is significantly tighter compared to that of a GIN-based classifier. This suggests that the GCN model exhibits greater robustness to structural perturbations, which aligns with our intuition since the normalization of the adjacency matrix is expected to attenuate the effects of adversarial perturbations. The proof of Theorem \ref{theo:main_result} is provided in Appendix~B.


While Theorem \ref{theo:main_result} and our experimental analysis focus on structural perturbations, similar analysis can be applied to node feature-based adversarial attacks. In this context, Theorem \ref{theo:result_feature_based} sheds light on the link between noise injection and enhancing robustness when the underlying model is subject to adversarial attacks targeting node features. 

\begin{theorem}
\label{theo:result_feature_based}
    Let $f$ denote a graph-based classifier composed of $2$ layers and based on 1-Lipschitz continuous activation functions. We consider injecting noise drawn from a centered Gaussian with a scaling parameter $\beta$. When subject to node feature-based perturbations of the input graph $(A, X)$, we have with respect to Definition \ref{def:robustness}:

    \begin{itemize}
        \item If $f$ is GCN-based then $f$ is $(\epsilon, \gamma)-\text{robust}$ with
            \begin{center}
                $\gamma = \frac{(\lVert W^{(2)} \lVert \lVert W^{(1)} \lVert  \epsilon)^2}{2\beta}$;
            \end{center}
        \item If $f$ is GIN-based then $f$ is $(\epsilon, \gamma)-\text{robust}$ with
            \begin{center}
                $\gamma = \frac{(\lVert A \lVert \lVert W^{(2)} \lVert \lVert W^{(1)} \lVert  \epsilon)^2}{2\beta}, $
            \end{center}         
    \end{itemize}
    
    where $W^{(\ell)}$ denotes the weight matrix of the $\ell$-th layer. 
\end{theorem}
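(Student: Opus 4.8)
The plan is to follow the same template as the proof of Theorem~\ref{theo:main_result}, except that now the adversary perturbs the node features while leaving the structure fixed, so that $\tilde{A}=A$ and $\lVert \tilde{X}-X\rVert \le \epsilon$ (taking the identity permutation in $d_{\mathcal{A},\mathcal{X}}$, which is the natural choice for node classification on a fixed graph). First I would unwind the definition of the adversarial risk in \eqref{equation:robustness_definition} with $d_{\mathcal{Y}}$ equal to the KL divergence: the noisy model produces its output by adding $T\sim\mathcal{N}(0,\beta I)$ to a deterministic pre-readout representation $z(A,X)$ and then applying a fixed (data-independent) map consisting of the final $1$-Lipschitz activation and whatever aggregation the readout layer carries. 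Hence the law of $f(A,X)$ is the pushforward of $\mathcal{N}(z(A,X),\beta I)$ under that fixed map, and by the data-processing inequality for the KL divergence, $d_{\mathcal{Y}}(f(\tilde{A},\tilde{X}),f(A,X)) \le \mathrm{KL}\big(\mathcal{N}(z(A,\tilde{X}),\beta I)\,\big\|\,\mathcal{N}(z(A,X),\beta I)\big)$. The closed form for the KL divergence between two Gaussians with a common covariance $\beta I$ then gives the clean bound $\tfrac{1}{2\beta}\lVert z(A,\tilde{X})-z(A,X)\rVert^2$, and since this holds for every admissible $(\tilde{A},\tilde{X})\in\mathcal{N}_\epsilon(A,X)$, the theorem reduces to a deterministic spectral-norm estimate on $\lVert z(A,\tilde{X})-z(A,X)\rVert$ with respect to Definition~\ref{def:robustness}.

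For that estimate I would write $z$ out explicitly. For the GCN, $z(A,X)$ is of the form $\hat{A}\,\phi^{(1)}(\hat{A}XW^{(1)})\,W^{(2)}$ with $\hat{A}=D^{-1/2}AD^{-1/2}$ (and for the GIN the same expression with $\hat{A}=A+(1+\lambda)I$). Because $\tilde{A}=A$, the difference $z(A,\tilde{X})-z(A,X)$ equals $\hat{A}\big(\phi^{(1)}(\hat{A}\tilde{X}W^{(1)})-\phi^{(1)}(\hat{A}XW^{(1)})\big)W^{(2)}$; applying $1$-Lipschitzness of $\phi^{(1)}$ and submultiplicativity of the spectral norm collapses this to a product of the operator norms of $\hat{A}$, $W^{(1)}$ and $W^{(2)}$ times $\lVert\tilde{X}-X\rVert\le\epsilon$. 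For the GCN I would then invoke the standard fact that the symmetrically normalised adjacency satisfies $\lVert\hat{A}\rVert\le 1$, which makes that factor disappear and yields $\gamma=\big(\lVert W^{(2)}\rVert\lVert W^{(1)}\rVert\epsilon\big)^2/(2\beta)$; for the GIN the corresponding factor is $\lVert A+(1+\lambda)I\rVert$, which is what produces the extra $\lVert A\rVert$ appearing in the stated bound. Note that, unlike the structural case of Theorem~\ref{theo:main_result}, here I do not need $\phi^{(1)}(0)=0$ nor the term $\lVert X\rVert$: a feature perturbation passes through the activation directly via its Lipschitz constant rather than through the magnitude of the hidden state.

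The step I expect to require the most care is the first one: making precise that the two output laws share the covariance $\beta I$ and differ only in their means, so that the Gaussian KL formula is applicable, and verifying that every operation applied \emph{after} the noise injection is data-independent and hence can only shrink the KL divergence. This is exactly the place where the choice of where the noise is injected enters, and it is the same subtlety already resolved in the proof of Theorem~\ref{theo:main_result}; once it is settled, the remaining computation is routine spectral-norm bookkeeping. A secondary point worth stating explicitly is the inequality $\lVert\hat{A}\rVert\le 1$ for the GCN normalisation versus the absence of such a bound for the GIN operator $A+(1+\lambda)I$, since this is precisely what makes the GCN guarantee tighter than the GIN one, mirroring the comparison already drawn for Theorem~\ref{theo:main_result}.
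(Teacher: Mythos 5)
Your proposal is correct and follows essentially the same route as the paper's Appendix~C: reduce the KL output distance to $\tfrac{1}{2\beta}\lVert f(A,X)-f(A,\tilde{X})\rVert^2$ and then bound the deterministic difference through $1$-Lipschitzness of the activations and spectral-norm submultiplicativity, invoking $\lVert \hat{A}\rVert \le 1$ for the GCN and retaining the factor $\lVert A+(1+\lambda)I\rVert$ for the GIN. The only minor difference is how the Gaussian step is justified: you use the closed-form KL between equal-covariance Gaussians together with the data-processing inequality for the layers applied after the injection, whereas the paper computes the R\'enyi divergence of the Gaussian-smoothed predictions and takes $\lambda \to 1$; both yield the same $\tfrac{1}{2\beta}$ factor, and your data-processing treatment is, if anything, a slightly more careful account of noise injected at an intermediate layer.
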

The proof of Theorem \ref{theo:result_feature_based} can be found in Appendix C.


\subsection{Complexity and Advantage of Our Approach}

Many existing defense methods suffer from a significant increase in complexity as the input graph size grows, making them challenging to apply in practical scenarios. For example, GNNGuard \cite{gnn_guard} involves computing neighbor importance estimation, which has a complexity of $\mathcal{O}(e \times |E|)$, where $e$ denotes the embedding dimension and $|E|$ represents the number of edges. GCN-Jaccard, which preprocesses the network by eliminating edges connecting nodes with a Jaccard similarity of features smaller than a chosen threshold, has a complexity of $\mathcal{O}(|E|)$. Moreover, GNN-SVD, to discard the high-rank perturbations, computes a low-rank approximation of the adjacency and features matrices derived from their SVD for which the complexity is $\mathcal{O}(|V|^3)$.
In contrast, our proposed approach based on noise injection in the architecture is advantageous due to its minimal complexity, requiring only sampling from a distribution. In this perspective, we note that we provide an experimental comparison analysis of the training time of the different cited methods against our proposed framework in Appendix E. 
Additionally, unlike many existing methods, our approach does not compromise the performance of the underlying GCN when applied to clean, non-attacked graphs, as will be shown in our experimental results.

\def\arraystretch{1.3}

\begin{table*}[h]

\centering
\resizebox{\textwidth}{!}{%
\fontsize{22pt}{22pt}\selectfont
\begin{tabular}{cllccccc|ccccc}
\hline
\multirow{2}{*}{}   & \multirow{2}{*}{Dataset}                      & \multicolumn{1}{c}{\multirow{2}{*}{$\epsilon$}} & \multicolumn{5}{c|}{GCN}                                                                                                        & \multicolumn{5}{c}{GIN}                                                                                                         \\ \cline{4-13} 
                          &                                               & \multicolumn{1}{c}{}                               & Guard                & Jaccard             & SVD                 & RGNN                    & Noisy                & Guard                & Jaccard             & SVD                 & RGNN                    & Noisy                \\ \hline
\multirow{12}{*}{\rotatebox{90}{Mettack}} & \multirow{3}{*}{Cora}                         & 0\%                                              & 77.5$\pm$0.7          & 80.9$\pm$0.7          & 80.6$\pm$0.4          & \textbf{83.5$\pm$0.3} & 83.2$\pm$0.4          & 82.5$\pm$0.3          & 80.8$\pm$0.5          & 79.7$\pm$1.0          & \textbf{83.5$\pm$0.3} & 83.0$\pm$0.2          \\
                          &                                               & 5\%                                       & 75.8$\pm$0.6          & 78.9$\pm$0.8          & 78.4$\pm$0.6          & 78.3$\pm$0.6          & \textbf{81.2$\pm$0.7} & 79.5$\pm$0.7          & 78.9$\pm$0.8          & 78.2$\pm$1.3          & 78.3$\pm$0.6          & \textbf{81.1$\pm$0.5} \\
                          &                                               & 10\%                                      & 74.7$\pm$0.4          & \textbf{76.7$\pm$0.7} & 71.5$\pm$0.8          & 70.7$\pm$0.8          & 74.5$\pm$0.6          & 73.8$\pm$1.3          & 76.4$\pm$0.5          & 73.9$\pm$1.2          & 70.7$\pm$0.8          & \textbf{76.7$\pm$0.8} \\ \cline{2-13} 
                          & \multirow{3}{*}{CiteSeer}                     & 0\%                                              & 70.1$\pm$1.5          & 71.2$\pm$0.7          & 70.7$\pm$0.4          & \textbf{72.3$\pm$0.5} & 71.9$\pm$0.4          & 71.2$\pm$0.6          & \textbf{72.5$\pm$1.2} & 71.9$\pm$1.6          & 72.3$\pm$0.5          & 71.9$\pm$0.6          \\
                          &                                               & 5\%                                       & 69.9$\pm$1.1          & 70.3$\pm$2.3          & 68.9$\pm$0.7          & 70.6$\pm$0.7          & \textbf{72.3$\pm$0.6} & \textbf{71.8$\pm$0.7} & 70.9$\pm$1.7          & 70.8$\pm$1.8          & 70.6$\pm$0.7          & 71.3$\pm$0.8          \\
                          &                                               & 10\%                                      & 70.0$\pm$1.5          & 67.5$\pm$2.1          & 68.8$\pm$0.6          & 68.7$\pm$1.2          & \textbf{70.4$\pm$0.8} & 67.8$\pm$1.1          & \textbf{70.2$\pm$1.5} & 69.3$\pm$1.8          & 68.7$\pm$1.2          & 69.2$\pm$1.3          \\ \cline{2-13} 
                          & \multirow{3}{*}{PubMed}                       & 0\%                                              & 84.5$\pm$0.6          & 85.0$\pm$0.5          & 82.7$\pm$0.3          & \textbf{85.1$\pm$0.8} & 85.0$\pm$0.6          & \textbf{85.1$\pm$0.6} & 84.9$\pm$0.9          & 82.8$\pm$0.3          & \textbf{85.1$\pm$0.8} & 84.8$\pm$0.4          \\
                          &                                               & 5\%                                       & \textbf{84.3$\pm$0.9} & 79.6$\pm$0.3          & 81.3$\pm$0.6          & 81.1$\pm$0.7          & 81.8$\pm$0.4          & \textbf{83.2$\pm$0.5} & 81.6$\pm$0.7          & 82.1$\pm$0.7          & 81.1$\pm$0.7          & 82.4$\pm$0.9          \\
                          &                                               & 10\%                                      & \textbf{84.1$\pm$0.3} & 67.4$\pm$1.1          & 81.1$\pm$0.7          & 65.2$\pm$0.4          & 73.3$\pm$0.6          & 78.5$\pm$0.9          & 77.5$\pm$1.5          & \textbf{81.6$\pm$0.6} & 65.2$\pm$0.4          & 78.9$\pm$1.8          \\ \cline{2-13} 
                          & \multirow{3}{*}{PolBlogs}                     & 0\%                                              & 93.1$\pm$0.6          & -                       & 86.5$\pm$0.8          & 94.9$\pm$0.3          & \textbf{95.2$\pm$0.4} & \textbf{95.6$\pm$0.9} & -                       & 93.4$\pm$0.6          & 95.2$\pm$0.3          & 94.9$\pm$0.7          \\
                          &                                               & 5\%                                       & 72.8$\pm$0.8          & -                       & \textbf{85.1$\pm$1.6} & 76.0$\pm$0.8          & 79.7$\pm$0.6          & 94.5$\pm$0.8          & -                       & 92.8$\pm$0.9          & 76.0$\pm$0.8          & \textbf{94.7$\pm$0.5} \\
                          &                                               & 10\%                                      & 68.7$\pm$1.0          & -                       & \textbf{84.8$\pm$2.3} & 69.2$\pm$1.2          & 73.4$\pm$0.5          & 92.5$\pm$0.9          & -                       & 92.1$\pm$1.6          & 69.2$\pm$1.2          & \textbf{92.8$\pm$0.6} \\ \hline
\multirow{8}{*}{\rotatebox{90}{PGD}}      & \multirow{2}{*}{Cora}                         & 5\%                                       & 71.0$\pm$1.0          & 73.9$\pm$0.8          & 69.9$\pm$0.6          & 75.8$\pm$0.9          & \textbf{76.6$\pm$0.3} & \textbf{81.8$\pm$1.1} & 80.1$\pm$0.6          & 74.6$\pm$1.2          & 75.8$\pm$0.9          & 81.3$\pm$0.4          \\
                          &                                               & 10\%                                      & 69.9$\pm$1.6          & 72.2$\pm$1.4          & 65.3$\pm$0.9          & 72.4$\pm$1.8          & \textbf{73.4$\pm$0.5} & \textbf{81.0$\pm$1.5} & 79.7$\pm$0.8          & 73.9$\pm$1.1          & 72.4$\pm$1.8          & 79.9$\pm$0.7          \\ \cline{2-13} 
                          & \multirow{2}{*}{CiteSeer}                     & 5\%                                       & 57.9$\pm$2.8          & 62.9$\pm$1.5          & 61.7$\pm$1.3          & 58.1$\pm$2.2          & \textbf{64.5$\pm$1.2} & 69.8$\pm$0.4          & 70.1$\pm$0.5          & 67.9$\pm$0.9          & 58.1$\pm$2.2          & \textbf{70.7$\pm$0.5} \\
                          &                                               & 10\%                                      & 58.2$\pm$3.8          & 61.3$\pm$0.7          & 59.5$\pm$0.3          & 56.2$\pm$0.8          & \textbf{62.2$\pm$1.0} & 68.9$\pm$0.9          & 69.4$\pm$0.7          & 65.6$\pm$1.3          & 56.2$\pm$0.8          & \textbf{70.0$\pm$0.7} \\ \cline{2-13} 
                          & \multirow{2}{*}{PubMed}                       & 5\%                                       & 75.3$\pm$0.4          & 76.1$\pm$0.7          & 67.7$\pm$1.5          & \textbf{78.5$\pm$0.8} & 76.2$\pm$0.7          & \textbf{81.0$\pm$0.3} & 80.8$\pm$0.6          & 80.8$\pm$0.9          & 78.5$\pm$0.8          & 80.6$\pm$0.4          \\
                          &                                               & 10\%                                      & \textbf{70.7$\pm$0.9} & 64.7$\pm$1.2          & 67.5$\pm$1.7          & 65.6$\pm$0.9          & 65.2$\pm$1.1          & \textbf{80.3$\pm$0.7} & 79.9$\pm$0.8          & 80.1$\pm$1.2          & 65.6$\pm$0.9          & 79.6$\pm$0.6          \\ \cline{2-13} 
                          & \multicolumn{1}{l}{\multirow{2}{*}{PolBlogs}} & 5\%                                       & 76.8$\pm$0.6          & -                       & 82.1$\pm$1.1          & 82.5$\pm$0.4          & \textbf{83.2$\pm$0.7} & 93.4$\pm$0.4          & -                       & 88.4$\pm$1.2          & 82.5$\pm$0.4          & \textbf{94.0$\pm$0.6} \\
                          & \multicolumn{1}{l}{}                          & 10\%                                      & 74.3$\pm$0.8          & -                       & \textbf{80.2$\pm$1.5} & 76.5$\pm$0.7          & 77.6$\pm$0.9          & 91.5$\pm$1.2          & -                       & 86.2$\pm$1.6          & 76.5$\pm$0.7          & \textbf{92.1$\pm$0.7} \\ \hline
\multirow{8}{*}{\rotatebox{90}{DICE}}     & \multirow{2}{*}{Cora}                         & 5\%                                       & 76.4$\pm$0.4          & 79.6$\pm$0.6          & 74.9$\pm$1.3          & 81.9$\pm$0.6          & \textbf{82.5$\pm$0.8} & 81.9$\pm$0.3          & 79.7$\pm$0.8          & 79.3$\pm$0.8          & \textbf{81.9$\pm$0.6} & 81.7$\pm$0.5          \\
                          &                                               & 10\%                                      & 76.6$\pm$0.5          & 78.6$\pm$0.8          & 73.5$\pm$1.5          & 80.0$\pm$0.6          & \textbf{80.5$\pm$0.6} & 79.2$\pm$0.6          & 78.6$\pm$0.6          & 78.1$\pm$1.1          & 80.0$\pm$0.6          & \textbf{80.5$\pm$0.8} \\ \cline{2-13} 
                          & \multirow{2}{*}{CiteSeer}                     & 5\%                                       & 68.5$\pm$1.6          & \textbf{70.9$\pm$0.4} & 69.4$\pm$1.6          & 69.3$\pm$0.5          & 70.8$\pm$0.3          & 69.6$\pm$1.5          & 70.3$\pm$0.7          & 65.7$\pm$1.8          & 69.3$\pm$0.5          & \textbf{70.8$\pm$0.9} \\
                          &                                               & 10\%                                      & 69.9$\pm$1.5          & 69.9$\pm$0.6          & 68.1$\pm$1.5          & 67.8$\pm$1.1          & \textbf{70.4$\pm$0.8} & 68.3$\pm$0.7          & 69.3$\pm$0.6          & 64.5$\pm$2.3          & 67.8$\pm$1.1          & \textbf{69.6$\pm$1.2} \\ \cline{2-13} 
                          & \multirow{2}{*}{PubMed}                       & 5\%                                       & \textbf{84.0$\pm$0.8} & 83.4$\pm$0.7          & 81.5$\pm$0.8          & 83.8$\pm$0.6          & 83.6$\pm$0.9          & \textbf{84.0$\pm$0.4} & 83.5$\pm$0.3          & 82.3$\pm$0.7          & 83.8$\pm$0.6          & 83.8$\pm$0.3          \\
                          &                                               & 10\%                                      & \textbf{83.6$\pm$1.0} & 81.8$\pm$0.5          & 81.4$\pm$0.5          & 82.4$\pm$0.8          & 82.1$\pm$2.3          & \textbf{82.9$\pm$0.8} & 82.0$\pm$0.5          & 82.0$\pm$0.9          & 82.4$\pm$0.8          & 82.5$\pm$0.6          \\ \cline{2-13} 
                          & \multicolumn{1}{l}{\multirow{2}{*}{PolBlogs}} & 5\%                                       & 81.3$\pm$0.7          & -                       & 86.5$\pm$2.3          & 89.6$\pm$0.4          & \textbf{90.3$\pm$0.3} & 93.3$\pm$0.3          & -                       & 90.9$\pm$0.5          & 89.6$\pm$0.4          & \textbf{93.5$\pm$0.5} \\
                          & \multicolumn{1}{l}{}                          & 10\%                                      & 78.9$\pm$0.6          & -                       & 85.3$\pm$2.8          & 85.5$\pm$0.9          & \textbf{86.1$\pm$0.9} & \textbf{91.1$\pm$1.2} & -                       & 89.9$\pm$0.8          & 85.5$\pm$0.9          & 89.7$\pm$0.8          \\ \hline
\end{tabular}%
}

\caption{Classification accuracy ($\pm$ standard deviation) of the models on different benchmark node classification datasets for different perturbation rates $\epsilon.$  The best accuracy in each setting, each dataset, and each model is typeset in bold.}
\label{tab:results_node_classification}

\end{table*}

\section{Experimental Results}
\label{sec:experimental_results}

This section focuses on empirically validating our theoretical findings by evaluating the performance of the proposed approach on real-world benchmarks. We begin by outlining the experimental settings employed in our study, followed by a comprehensive analysis and discussion of the obtained results. Through our experimental evaluation, we aim to address three main key aspects: firstly, the effectiveness of our method in defending against adversarial attacks, particularly structural perturbations, and secondly, its capability to maintain the model's accuracy and performance, especially when tested on non-attacked input graphs and finally its complexity in terms of training time compared to other methods.

\subsection{Experimental Setup}
We focus on node classification where we use the citation networks Cora, CiteSeer, and PubMed \cite{dataset_node_classification} and blog and citation graphs, \ie Polblogs \cite{polblogs_dataset} and OGBN-Arxiv \cite{hu2021open}. Note that in the Polblogs graph, node features are not available. Further information about the used datasets and implementation details are provided in Appendix~D. 
For all the experiments, the baseline models consist of a 2-layer GCN-based classifier combined with an MLP as a readout. This choice aimed to ensure a fair evaluation of the models' robustness within the same architectural conditions. The experiments were conducted using the Adam optimizer \cite{kingma_adam} and standardized hyperparameters, including a learning rate of 1e-2, 300 epochs, and 16 as the hidden dimension. 
To reduce the impact of random initialization, we repeated each experiment 10 times and used the train/validation/test splits provided with the datasets \cite{yang_2016}. Our code is publicly available in GitHub\footnote{Code: https://github.com/Sennadir/NoisyGNN}.


\textbf{Attacks.} We use three main global structural-based adversarial attacks: \textbf{(i)} We first consider the optimization-based formulation of the adversarial task Mettack with the “Meta-Self” training strategy. \textbf{(ii)} We afterward consider another optimization-based adversarial attack based on Proximal Gradient Descent (PGD) \cite{pgd_paper} and \textbf{(iii)} we finally consider DICE \cite{zugner2019adversarial}. For all these attacks, we tested and considered two perturbation budgets (in term of percentage) $\epsilon \in \{5\%,10\%\}$. 


\textbf{Baseline Models.} To provide a comprehensive empirical evaluation, we compared our proposed defense algorithm, NoisyGCN, against four baseline methods that specifically address structural perturbations. The baseline models we consider are:
\textbf{(i)} GNN-Jaccard \citep{gnn_jaccard} that preprocesses the input adjacency matrix to identify potential edge manipulations. \textbf{(ii)} RGCN \citep{zhu2019robust}, that employs Gaussian distributions as hidden representation to absorb the effect of structural adversarial attacks. \textbf{(iii)} GNN-SVD, which employs a low-rank approximation of the adjacency matrix to filter out noise, and \textbf{(iv)} GNNGuard \citep{gnn_guard}, which is based on edge pruning to defend against adversarial perturbations.

\subsection{Experimental Results}

Table~\ref{tab:results_node_classification} presents the average node classification accuracies for the GCN, the GNNGuard, GNN-Jaccard, RGNN, GNN-SVD, and the proposed approach, NoisyGNN for both GCN and GIN. 
The empirical findings reveal that, in the absence of attacks, the proposed approach demonstrates comparable accuracy to the classical GCN, and in some cases, it even improves the model's generalization and performance, as studied and analyzed by prior research \cite{wu2023adversarial}. Importantly, these results affirm that our approach does not compromise the performance of the underlying network, addressing our second research question. This is particularly significant as real-world scenarios often involve uncertain knowledge regarding potential malicious perturbations on the input graph. Hence, it is crucial that an effective defense strategy does not diminish the predictive capabilities of the model, while simultaneously enhancing its robustness.
The results furthermore indicate that our proposed noise injection approach performs on par with and even surpasses state-of-the-art defense baselines in several instances when working with both GCN and GIN. Notably, it demonstrates greater efficiency when subjected to the “PGD” and “DICE” attack framework. Moreover, we observe an almost consistent outperformance compared to GNN-SVD, GNN-Jaccard, and RGNN, while also exhibiting competitive performance against the highly performant GNNGuard. It is important to highlight that despite similar performance to GNNGuard, the main advantage of our approach is its theoretical guarantees but also its significantly reduced complexity in terms of operation and time. The complete time analysis study is provided in Appendix~E. 

\begin{figure}[t]
\centering
\begin{minipage}{.5\columnwidth}
  \centering
  \includegraphics[width=0.99\linewidth]{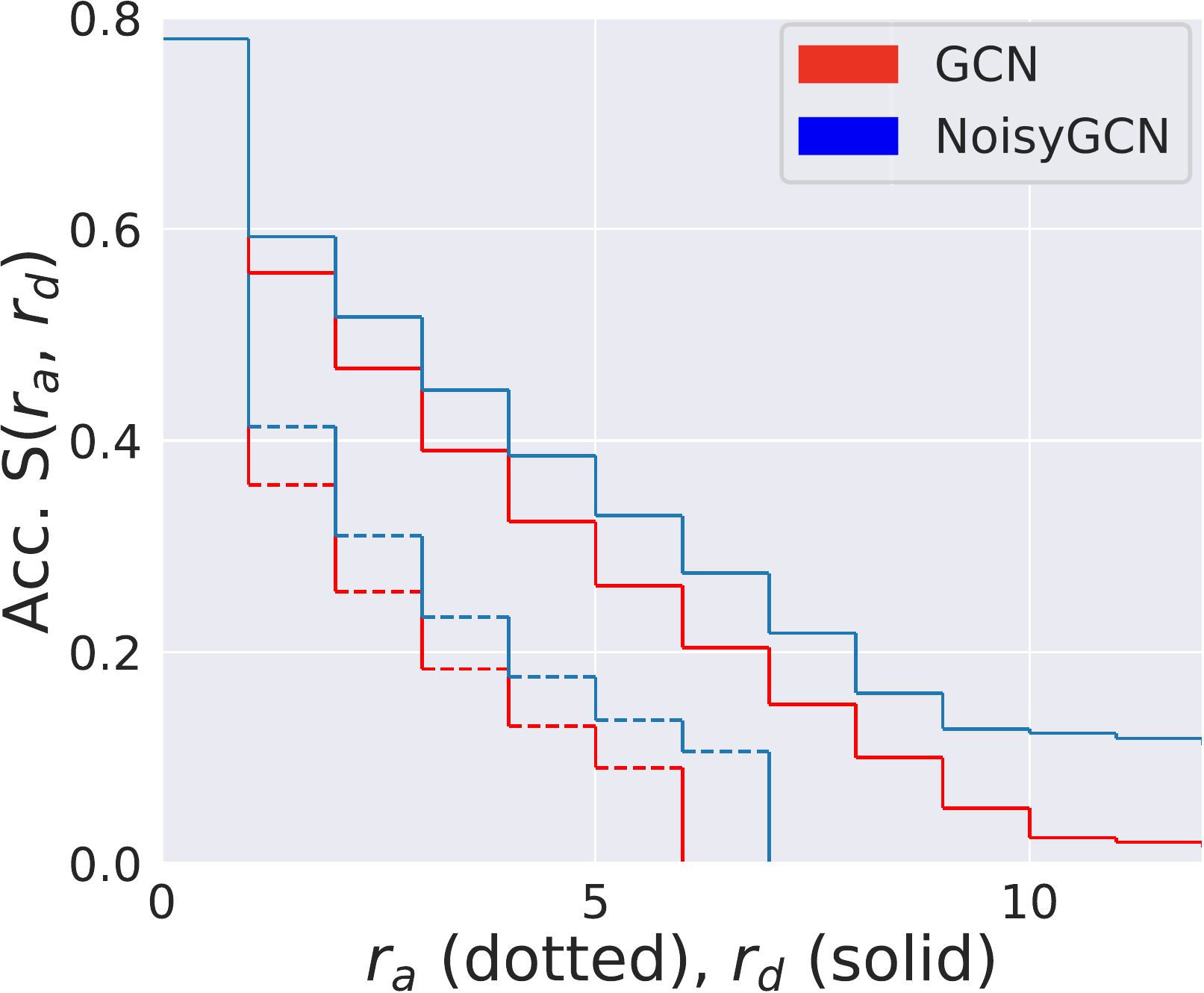}
  \caption*{(a) Cora}
  \label{fig:test1}
\end{minipage}%
\begin{minipage}{.50\columnwidth}
  \centering
  \includegraphics[width=0.99\linewidth]{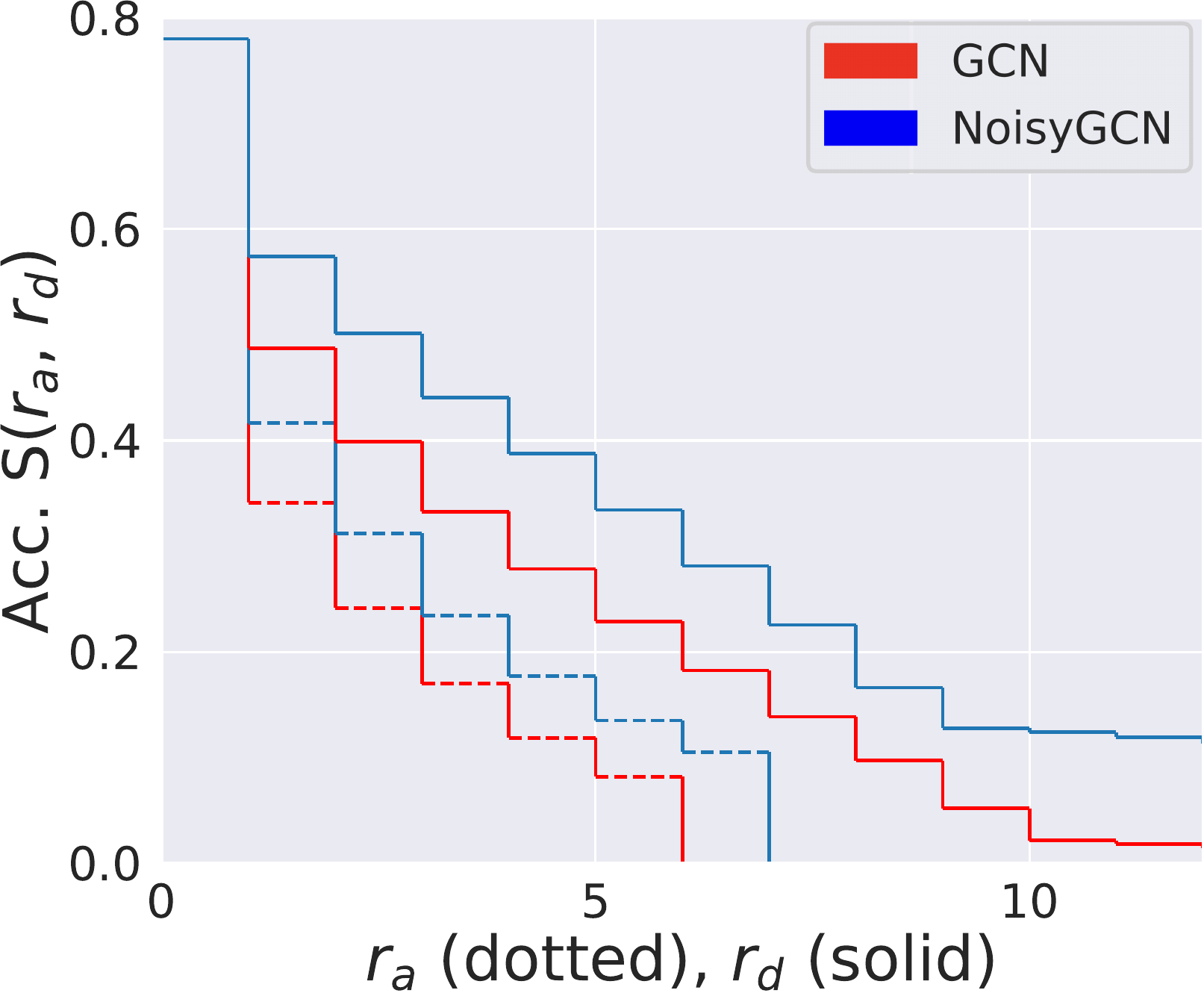}
  
 \caption*{(b) CiteSeer}
  \label{fig:test2}
\end{minipage}

\caption{Robustness guarantees on (a) Cora and (b) CiteSeer, where $r_a$ is the certified radius – maximum number of adversarial additions (and $r_d$ for deletions).}
 \label{fig:certified_robustness}
\end{figure}

\textbf{Evaluation Through Robustness Certificates.} 
We furthermore conducted a comparative analysis between the GCN and our proposed NoisyGCN where we assessed their certified robustness on the Cora and CiteSeer datasets using the sparse randomized smoothing approach \cite{bojchevski_certificate_2020}. Specifically, we plotted their certified accuracy $S(r_a, r_d)$ while varying the radii for addition ($r_a$) and deletion ($r_d$) of edges.
Figure~\ref{fig:certified_robustness} illustrates the results, showcasing that our proposed noise injection method significantly enhances the certified accuracy of NoisyGCN when subject to structural perturbations. This observation emphasizes the effectiveness of our approach in improving the model's robustness.

\textbf{Node Feature Based Adversarial Attacks.} 
We furthermore empirically study Theorem \ref{theo:result_feature_based} concerning node feature-based adversarial attacks. We consider two main feature-based adversarial attacks: 
\textbf{(i)} The baseline random noise attack adding Gaussian noise $\mathcal{N}(0, \mathbf{I})$ to the node features with a scaling parameter $\xi$ to control the attack budget;  
\textbf{(ii)} The  white-box Proximal Gradient Descent (PGD) \cite{pgd_paper} attack. We note that this attack is more adapted for continuous spaces and hence is known to be powerful in the node feature space. For this analysis, we focused on the citation network between Computer Science arXiv papers OGBN-Arxiv \cite{hu2021open} to demonstrate the ability of our method on larger datasets. We note that this dataset could not be used for the structural perturbations since the majority of the available attacks require access to a dense form of the adjacency matrix which is not very realistic at large scale.
We compare our method against defense methods adapted for feature-based adversarial perturbations: \textbf{(i)} GCN-k \cite{seddik_rgcn} that proposes to enhance the robustness of GNNs to noise and adversarial attacks by incorporating a node feature kernel into the message passing operators; \textbf{(ii)} RobustGCN (RGCN) leveraging Gaussian distributions as hidden representations to absorb the impact of adversarial attacks; \textbf{(iii)} AIRGNN \cite{airgnn} that edited the Message Passing module with adaptive residual connections and feature aggregation to improve the GNN's robustness against abnormal node features. 
Table \ref{tab:node_features_attacks} reports the clean and attacked accuracy for the considered benchmark defense methods alongside our introduced NoisyGCN. The findings highlight our method's ability to defend against node feature-based adversarial attacks, thereby affirming the theoretical conclusions drawn in Theorem \ref{theo:result_feature_based}.

\begin{table}[t]

\begin{center}
\resizebox{\columnwidth}{!}{%
\fontsize{16pt}{16pt}\selectfont
\begin{tabular}{l|cccc}
\toprule
Attack & GCN-k & AirGNN & RGCN & NoisyGCN \\
\midrule
Clean    & 56.0$\pm$0.3 & 61.9$\pm$0.9 & 65.1$\pm$1.8 & \textbf{66.9$\pm$0.5} \\

\midrule

$\xi=0.5$    & 52.8$\pm$0.5 & 59.0$\pm$1.3 & 63.8$\pm$1.9 & \textbf{65.8$\pm$2.0} \\
$\xi=1.0$   & 46.6$\pm$0.6 & 51.9$\pm$1.6 & 63.0$\pm$2.4 & \textbf{64.3$\pm$1.3} \\

\midrule

PGD    & 49.9$\pm$0.7 & 55.7$\pm$0.9 & 63.6$\pm$0.7 & \textbf{64.9$\pm$1.1} \\

\bottomrule
\end{tabular} 
}

\caption{Classification accuracy ($\pm$ standard deviation) of the models on the OGBN-Arxiv node classification dataset before (``Clean'') and after the attack application.}

\label{tab:node_features_attacks}

\end{center}
\end{table}

\textbf{Combining Adversarial Defenses.} 
\label{sec:combine_defenses}
As previously discussed, our approach is model-agnostic, allowing it to be applied to various GNN architectures. Therefore, it is meaningful to experimentally assess the impact of combining our proposed noise injection with other benchmark defense methods, especially for large attack budgets. In this context, we consider methods that do not modify the model architecture but instead introduce a defense layer. Specifically, we examine pre-processing methods such as GNN-Jaccard \cite{gnn_jaccard} and edge pruning methods like GNNGuard \cite{gnn_guard}.
However, certain methods such as RGNN \cite{zhu2019robust} are excluded from this section due to their major edits in the architecture, which do not directly align with our proposed noise injection. We report the results for both GCN and GIN for the Mettack with an attack budget of $\epsilon=25\%$ while we provide results of the other attacks and budgets in Appendix F. 

The results in Table \ref{tab:combine_results} demonstrate the improved defense results obtained by combining our noise injection approach with existing defense methods across various datasets and underlying models. Notably, the combination shows promising outcomes in mitigating the added defense layer's effect on the clean accuracy, addressing a significant weakness in methods like GNNGuard. These findings underscore the capability of our method to enhance the robustness of any underlying architecture with minimal additional complexity.

\def\arraystretch{0.9}

\begin{table}[t]


\begin{center}
\begin{tabular}{l|ccc}
\toprule
Method & Cora & CiteSeer & PolBlogs \\
\midrule
GINGuard    & 61.8$\pm$0.5 & 55.6$\pm$1.8 & 82.7$\pm$0.6 \\
+ Noisy & \textbf{66.2$\pm$1.3} & \textbf{58.3$\pm$1.9} & \textbf{83.6$\pm$0.8} \\
\midrule
GIN-Jaccard    & 70.4$\pm$1.1 & 61.2$\pm$2.3 & - \\
+ Noisy & \textbf{72.9$\pm$0.8} & \textbf{64.9$\pm$1.8 }& - \\
\midrule

GCNGuard    & 69.5$\pm$0.7 & 66.2$\pm$0.6 & 64.7$\pm$0.8 \\
+ Noisy & \textbf{72.4$\pm$1.2} & \textbf{68.9$\pm$0.9} & \textbf{65.8$\pm$1.3}\\
\midrule

GCN-Jaccard    & 66.7$\pm$0.5 & 61.2$\pm$1.1 & - \\
+ Noisy & \textbf{69.6$\pm$0.9} &\textbf{63.1$\pm$0.6} & -\\

\bottomrule
\end{tabular} 

\caption{Classification accuracy ($\pm$ standard deviation) of combining defense methods with the proposed noise injection on different benchmark datasets. }

\label{tab:combine_results}

\end{center}
\end{table}


\section{Conclusion}
In this study, we present NoisyGNN, a highly effective and cost-effective defense method for Graph Neural Networks (GNNs). Through rigorous theoretical analysis, we establish a clear and compelling connection between noise injection in the victim model's architecture and enhanced robustness. Our proposed method offers a significant advantage by introducing minimal additional complexity while delivering strong defense performance.
Comprehensive experimental comparisons conducted on diverse real-world datasets demonstrate that our proposed framework achieves comparable or even superior performance when compared to standard GCN and GIN models, as well as existing defense methods specifically designed for those models. This highlights the effectiveness and versatility of our approach.
While our primary focus in this study was on the GCN and GIN architectures, it is important to note that our approach is model-agnostic, as demonstrated by successful combinations of our defense method with other existing techniques. Additionally, our theoretical analysis has the potential to be explored and extended to other GNN architectures, thus opening up avenues for further research and application.

\section*{Acknowledgements}
This work was partially supported by the Wallenberg AI, Autonomous Systems and Software Program (WASP) funded by the Knut and Alice Wallenberg Foundation. The computation (on GPUs) was enabled by resources provided by the National Academic Infrastructure for Supercomputing in Sweden (NAISS) at Alvis partially funded by the Swedish Research Council through grant agreement no. ``2022-06725''. This work was granted access to the HPC resources of IDRIS under the allocation ``2023-AD010613410R1'' made by GENCI. Y.A. is supported by the French National research agency via the AML-HELAS (ANR-19-CHIA-0020) project.

\bibliography{aaai24}


\newpage
\appendix
\onecolumn

\vbox{%
  \hsize\textwidth%
  \linewidth\hsize%
  \vskip 0.625in minus 0.125in%
  \centering%
  {\LARGE\bf Appendix: Proofs And Experimental Details \par}%
  \vskip 0.1in plus 0.5fil minus 0.05in%
}%


\section{Proof of Proposition \ref{pro:generalization_robustness}}\label{sec:proof_proposition}

\setcounter{theorem}{1}

\begin{proposition}
Let $f: (\mathcal{A}, \mathcal{X}) \rightarrow \mathcal{Y}$ be a graph-based function, with respect to the chosen input and output space distance, the following holds: 
    \begin{center}
    $f$ is $(\epsilon, \gamma)-\text{robust}$ $\Rightarrow$ $f$ is $(\epsilon, \gamma)-\text{``worst-case'' robust}$.
    \end{center}
\end{proposition}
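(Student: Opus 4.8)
The plan is to unfold both robustness notions and reduce the implication to monotonicity of the expectation. I read the adversarial risk $\mathcal{R}_\epsilon[f]$ of Definition~\ref{def:robustness} as already aggregating, for each sampled graph $(A,X)$, over its entire admissible neighborhood $\mathcal{N}_\epsilon(A,X)$ (the clause ``$(\tilde A,\tilde X)\in\mathcal{N}_\epsilon(A,X)$'' under the expectation being the usual adversarial-risk shorthand for the supremum over that neighborhood), whereas ``worst-case'' robustness asks only that, for each \emph{concrete} adversarial perturbation an attacker could actually deploy, the induced expected output deviation stay below $\gamma$. With this reading the ``average'' notion is the stronger of the two, so the implication should follow by a short domination argument.

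Concretely, I would first fix an arbitrary admissible adversary, i.e.\ a (measurable) selection $(A,X)\mapsto(\tilde A,\tilde X)$ with $(\tilde A,\tilde X)\in\mathcal{N}_\epsilon(A,X)$ on the support of $\mathcal{D}$. Since this perturbed graph is one of the candidates over which $\mathcal{R}_\epsilon[f]$ aggregates, we have the pointwise bound
\[
d_{\mathcal{Y}}\big(f(\tilde A,\tilde X),\,f(A,X)\big)\;\le\;\sup_{(\tilde A',\tilde X')\in\mathcal{N}_\epsilon(A,X)} d_{\mathcal{Y}}\big(f(\tilde A',\tilde X'),\,f(A,X)\big)
\]
for every $(A,X)$. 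Taking the expectation over $(A,X)\sim\mathcal{D}$ on both sides and using monotonicity of the expectation yields $\mathbb{E}_{(A,X)\sim\mathcal{D}}\big[d_{\mathcal{Y}}(f(\tilde A,\tilde X),f(A,X))\big]\le\mathcal{R}_\epsilon[f]\le\gamma$. Because the adversary was arbitrary, this is exactly the assertion that $f$ is $(\epsilon,\gamma)$-``worst-case'' robust, and crucially with the \emph{same} constant $\gamma$; since $d_{\mathcal{Y}}$ is the KL divergence and hence non-negative, no cancellation can weaken the estimate.

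The main obstacle I expect is not the inequality itself but the measure-theoretic bookkeeping surrounding it. One must (i) treat the output as the probabilistic space induced by the Gaussian noise injection, so that $f(\cdot)$ is a distribution and $d_{\mathcal{Y}}$ a divergence between distributions, as in \citet{pinot2019theoretical}; (ii) ensure the supremum over $\mathcal{N}_\epsilon(A,X)$ is measurable as a function of $(A,X)$ (for structural perturbations $\mathcal{N}_\epsilon$ is a finite set, and for feature perturbations one can reduce to a countable dense family, so this is routine); and (iii) argue that an (approximately) worst-case adversarial selection exists as a measurable map, so that the expectation defining the worst-case risk is well posed — a standard measurable-selection argument. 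Once these technicalities are handled, the proof collapses to the one-line domination-plus-expectation estimate above.
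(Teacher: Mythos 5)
Your argument is correct and is essentially the paper's own proof: the paper fixes the set of worst-case adversarial examples $\mathcal{S}_{\epsilon}(A,X) \subseteq \mathcal{N}_{\epsilon}(A,X)$ and concludes $\mathcal{R}^w_{\epsilon}[f] \le \mathcal{R}_{\epsilon}[f] \le \gamma$ by the same domination-plus-monotonicity-of-expectation step that you perform with an arbitrary admissible adversarial selection. Your explicit reading of the inner clause as a supremum over the neighborhood and your measurability caveats merely spell out what the paper's one-line subset argument leaves implicit.
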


\begin{proof}

Let $(A,X)$ be the considered input graph and let $\mathcal{N}_{\epsilon}(A, X) = \{(\tilde{A}, \tilde{X}):  d_{\mathcal{A}, \mathcal{X}}((A, X), (\tilde{A}, \tilde{X}))<\epsilon\}$ be its corresponding neighborhood for an $\epsilon$ attack budget. Let $\mathcal{S}_{\epsilon}(A, X)$ be the set of worst-case adversarial examples within the considered budget $\epsilon$ and let's denote $\mathcal{R}^w_{\epsilon}[f]$ as the adversarial risk related to worst-case evaluation defined as:

\begin{equation*}
    \mathcal{R}^w_{\epsilon}[f] = \mathop{\mathbb{E}}_{\substack{(A, X) \sim \mathcal{D} \\ (\tilde{A}, \tilde{X}) \in \mathcal{S}_{\epsilon}(A, X)}} [d_{\mathcal{Y}}(f(\tilde{A}, \tilde{X}), f(A, X))].
\end{equation*}

By definition, we have:  $\mathcal{S}_{\epsilon}(A, X) \subseteq \mathcal{N}_{\epsilon}(A, X)$ and hence: 
\begin{equation}
\label{equation:inequality_risk}
    \mathcal{R}^w_{\epsilon}[f] \leq \mathcal{R}_{\epsilon}[f].
\end{equation}

Let us consider a graph-based function $f$ to be $(\epsilon, \gamma)-\text{robust},$ \ie $\mathcal{R}_{\epsilon}[f] \leq \gamma$. Based on Equation \eqnref{equation:inequality_risk}, we also have that $\mathcal{R}^w_{\epsilon}[f] \leq \gamma$ and consequently have the stated result, \ie $f$ is also $(\epsilon, \gamma)-\text{``worst-case'' robust}$. 

\end{proof}

\section{Proof of Theorem \ref{theo:main_result}}
\label{sec:proof_theorem}

\begin{theorem}
    Let $f$ denote a graph-based function composed of $2$ layers and based on 1-Lipschitz continuous activation functions. We consider injecting noise drawn from a centered Gaussian with a scaling parameter $\beta$. When subject to structural perturbations of the input graph $(A, X),$ with a budget $\epsilon$, we have with respect to Definition~\ref{def:robustness}:

    \begin{itemize}
        \item If $f$ is GCN-based then $f$ is $(\epsilon, \gamma)-\text{robust}$ with
                $$\gamma = \frac{2(\lVert W^{(2)} \lVert \lVert W^{(1)} \lVert \lVert X \lVert \epsilon)^2}{\beta};$$
        \item If $f$ is GIN-based then $f$ is $(\epsilon, \gamma)-\text{robust}$ with
                $$\gamma = \frac{(\lVert W^{(2)} \lVert \lVert W^{(1)} \lVert \lVert X \lVert \epsilon (2 \lVert A \lVert + \epsilon))^2}{2\beta},$$
    \end{itemize}
where $W^{(\ell)}$ denotes the weight matrix of the $\ell$-th layer. 

\end{theorem}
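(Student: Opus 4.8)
The plan is to follow the randomization–robustness template of \citet{pinot2019theoretical}, adapted to message passing. First I would unpack the noise-injection formula in the two-layer case with the noise placed after the last layer, so that the randomized classifier reads $f(A,X) = g(A,X) + T$, where $g = \phi^{(2)}\bigl(\hat A\,\phi^{(1)}(\hat A\,\cdot\,W^{(1)})\,W^{(2)}\bigr)$ is the underlying deterministic GCN (resp. GIN) and $T \sim \mathcal{N}(0,\beta I)$. Then $f(A,X)$ and $f(\tilde A,\tilde X)$ are Gaussian vectors sharing the covariance $\beta I$ and differing only through their means $g(A,X)$ and $g(\tilde A,\tilde X)$, so with $d_{\mathcal{Y}}$ taken to be the KL divergence (as announced just before the theorem) the closed form for the KL between two equal-covariance Gaussians gives $d_{\mathcal{Y}}\bigl(f(\tilde A,\tilde X),f(A,X)\bigr) = \tfrac{1}{2\beta}\lVert g(\tilde A,\tilde X)-g(A,X)\rVert^{2}$. (If the noise were injected at an earlier layer, I would first invoke the data-processing inequality for the KL divergence to discard the post-injection layers, reducing to exactly this computation.) The problem thus collapses to bounding the deterministic output sensitivity $\lVert g(\tilde A, X)-g(A,X)\rVert$ over the structural neighbourhood, where I may set $\tilde X = X$ and, using the minimizing permutation in $d_{\mathcal{A},\mathcal{X}}$ to align $\tilde A$ with $A$, reduce to $\lVert \tilde A - A\rVert \le \epsilon$.

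Next I would propagate the adjacency perturbation through the two layers. Peeling off the $1$-Lipschitz activation $\phi^{(2)}$ and the weight $W^{(2)}$ via submultiplicativity of the spectral norm gives $\lVert g(\tilde A,X)-g(A,X)\rVert \le \lVert W^{(2)}\rVert\,\lVert \hat{\tilde A} B-\hat A C\rVert$ with $B=\phi^{(1)}(\hat{\tilde A}XW^{(1)})$ and $C=\phi^{(1)}(\hat A XW^{(1)})$. The telescoping identity $\hat{\tilde A}B-\hat A C=\hat{\tilde A}(B-C)+(\hat{\tilde A}-\hat A)C$, together with $1$-Lipschitzness of $\phi^{(1)}$ and $\phi^{(1)}(0)=0$ (true for the standard choices such as $\tanh$), so that $\lVert C\rVert\le\lVert\hat A\rVert\lVert X\rVert\lVert W^{(1)}\rVert$ and $\lVert B-C\rVert\le\lVert\hat{\tilde A}-\hat A\rVert\lVert X\rVert\lVert W^{(1)}\rVert$, yields a bound of the shape $\lVert g(\tilde A,X)-g(A,X)\rVert \le \lVert W^{(2)}\rVert\lVert W^{(1)}\rVert\lVert X\rVert\,\lVert\hat{\tilde A}-\hat A\rVert\,(\lVert\hat{\tilde A}\rVert+\lVert\hat A\rVert)$. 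For GCN I then use that the normalised $\hat A = D^{-1/2}AD^{-1/2}$ has spectral norm at most $1$, so $\lVert\hat{\tilde A}\rVert+\lVert\hat A\rVert\le 2$, and $\lVert\hat{\tilde A}-\hat A\rVert\le\epsilon$; substituting into the Gaussian-KL expression produces exactly $\gamma = 2(\lVert W^{(2)}\rVert\lVert W^{(1)}\rVert\lVert X\rVert\epsilon)^{2}/\beta$ (the constant $2$ coming from squaring the factor $\lVert\hat{\tilde A}\rVert+\lVert\hat A\rVert = 2$). For GIN, $\hat A = A+(1+\lambda)I$ means $\hat{\tilde A}-\hat A = \tilde A-A$ has norm at most $\epsilon$ while $\lVert\hat A\rVert$ and $\lVert\hat{\tilde A}\rVert$ are controlled by $\lVert A\rVert$ and $\lVert A\rVert+\epsilon$ (the self-loop constant absorbed), which turns $\lVert\hat{\tilde A}\rVert+\lVert\hat A\rVert$ into $2\lVert A\rVert+\epsilon$ and gives the stated $\gamma = (\lVert W^{(2)}\rVert\lVert W^{(1)}\rVert\lVert X\rVert\epsilon(2\lVert A\rVert+\epsilon))^{2}/(2\beta)$. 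Since all these estimates are uniform over $\mathcal{N}_{\epsilon}(A,X)$ and over $(A,X)\sim\mathcal{D}$, taking the expectation in the definition of $\mathcal{R}_{\epsilon}[f]$ concludes that $f$ is $(\epsilon,\gamma)$-robust.

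The step I expect to be the main obstacle is controlling $\lVert\hat{\tilde A}-\hat A\rVert$ for the GCN: degree normalisation is not $1$-Lipschitz in $A$, since adding or removing an edge also perturbs the degree matrix, so I would either impose the perturbation budget directly on the normalised operator (making $\lVert\hat{\tilde A}-\hat A\rVert\le\epsilon$ true by definition) or carry an auxiliary estimate of how the degree changes feed back into the bound. Everything else is careful bookkeeping of the telescoping so that the contraction $\lVert\hat A\rVert\le 1$ is invoked precisely where it makes the GCN bound strictly tighter than the GIN one, plus checking that the constants line up with the claimed $\gamma$'s.
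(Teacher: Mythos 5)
Your proposal follows essentially the same route as the paper's proof: a closed-form Gaussian KL bound (the paper obtains it as the $\lambda \to 1$ limit of the Renyi divergence), followed by the same telescoping decomposition $\hat{\tilde A}B - \hat A C = \hat{\tilde A}(B-C) + (\hat{\tilde A}-\hat A)C$ with $1$-Lipschitz activations and spectral-norm submultiplicativity, using $\lVert \hat A \rVert \le 1$ for the GCN and $\lVert A' \rVert \le \lVert A \rVert + \epsilon$ for the GIN to recover the stated constants. The obstacle you flag for the GCN, namely that degree normalisation is not $1$-Lipschitz in $A$, is handled in the paper exactly as in your fallback: the budget $\epsilon$ is implicitly taken to bound $\lVert \tilde A - \tilde A' \rVert$ for the normalised operators themselves, so your attempt matches the paper's argument including this implicit assumption.
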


\begin{proof}
We begin by recalling the notation and definition of the involved quantities. 
We consider $f$ to be a 2-layer GCN-based classifier with 1-Lipschitz continuous activation functions. Let $A'$ denote the perturbed adjacency matrix produced as a result of the considered structural perturbations.  While, we consider injecting noise that is sampled from a Gaussian distribution $\mathcal{N}(0, I)$ with a scaling parameter $\beta$, we start by considering the general case of a centered Gaussian with a variance parameter $\Sigma$ (with the maximal singular value of its inverse denoted by $\sigma_{max}(\Sigma^{-1})$). Further recall that the embedding dimension of the hidden states in our graph-based classifier is denoted by $e.$ If we let $\ast$ denote the convolution operation then the distribution of our predictions $f(A,X)$ is $p_G \ast \delta_{f(A, X)},$ where $p_G$ denotes the Gaussian probability density function arising as a consequence of the Gaussian noise we add to the hidden states, and the Dirac delta, $\delta_{f(A, X)},$ corresponding to the trivial distribution of the prediction in the absence of noise.

Let us consider the general case of the Renyi Divergence denoted by $d_{R, \lambda}$ and $\mu$ denoting the corresponding probability measure over which we integrate.  Note that the KL divergence can be recovered from the  Renyi Divergence when $\lambda \rightarrow 1.$ 
\begin{align*}
   d_{R, \lambda}(f(A, X), f(A', X))  & = \frac{1}{\lambda - 1} \log \int_{\mathbb{R}^e} (p_G \ast \delta_{f(A, X)})^\lambda (p_G \ast \delta_{f(A', X)})^{(1 - \lambda)} d\mu\\
                    & =  \frac{1}{\lambda - 1} \log \int_{\mathbb{R}^e} \frac{e^{-1/2[\lambda(z - f(A, X))^T \Sigma^{-1} (z-f(A,X)) + (1 - \lambda)(z - f(A', X))^T \Sigma^{-1} (z-f(A',X)) ] }}{ (2\pi)^{\frac{e}{2}} \vert\Sigma\vert^{\frac{1}{2}}}dz \\
                    & = \frac{\lambda - 1}{2} [f(A,X) - f(A', X)]^T (\Sigma^{-1}) [f(A,X) - f(A', X)] \numberthis \label{eqn:BigStep}\\
                    & \leq \frac{\lambda}{2} \sigma_{max}(\Sigma^{-1}) \lVert f(A,X) - f(A', X) \rVert^2,
\end{align*}
where Equation \eqnref{eqn:BigStep} follows by collecting all terms that do not depend on $z,$ pulling them out of the integral and then realizing that the integral is over the whole domain of a Gaussian probability function and hence equals $1.$

From this result, we can deduce the result for our output distance which is the KL divergence:
\begin{align*}
    d_{\mathcal{Y}}(f(A, X), f(A', X)) & = \lim_{\lambda \rightarrow 1} d_{R, \lambda}(f(A, X), f(A', X)) \\
                        & \leq \frac{1}{2} \sigma_{max}(\Sigma^{-1}) \lVert f(A,X) - f(A', X) \lVert^2. \numberthis  \label{eq:kl_result}
\end{align*}

In what follows, we study the effect of input perturbation which is reflected by the quantity  $\lVert f(A,X) - f(A', X) \lVert$ in the previous result.

In our considered 2-layers GCN $f$, we denote the activation functions in the first and second layers by $\phi^{(1)}$ and $\phi^{(2)},$ respectively. We further denote the operations performed by the first GCN layer, taking the adjacency matrix and node features as input, by $\Phi^{(1)} (A, X).$ The normalized adjacency matrix that corresponds to the GCN message passing operator is denoted by $\Tilde{A'}$ for adjacency matrix $A'$ and $\Tilde{A}$ for $A$. Note that the spectral norm of both $\Tilde{A}$ and $\Tilde{A'}$ equals to one, i.e., $\lVert \Tilde{A'} \rVert_2 = \lVert \Tilde{A} \rVert_2=1.$ 
We can then express the normed difference of GCN predictions as follows.

\begin{align*}
   \lVert f(A, X) - f(A', X) \lVert  & = \lVert \phi^{(2)}(\Tilde{A} \Phi^{(1)} (A, X)  W^{(2)})                                               - \phi^{(2)}(\Tilde{A'} \Phi^{(1)} (A', X)  W^{(2)}) \lVert \\
                    & \leq \lVert \Tilde{A} \Phi^{(1)} (A, X)  W^{(2)} - \Tilde{A'} \Phi^{(1)} (A', X)  W^{(2)}  \lVert \\
                    & \leq \lVert W^{(2)} \lVert \lVert \Tilde{A} \Phi^{(1)} (A, X) - \Tilde{A} \Phi^{(1)} (A', X) + \Tilde{A} \Phi^{(1)} (A', X)  - \Tilde{A'} \Phi^{(1)} (A', X) \lVert \\
                    & \leq \lVert W^{(2)} \lVert \lVert \Tilde{A} [ \Phi^{(1)} (A, X) - \Phi^{(1)} (A', X)] + \Phi^{(1)} (A', X) [\Tilde{A} - \Tilde{A'}] \lVert. 
\end{align*}

We also have the following.
\begin{align*}
   \lVert \Phi^{(1)} (A, X) - \Phi^{(1)} (A', X) \lVert  & = \lVert \phi^{(1)}(\Tilde{A} X  W^{(1)})                                               - \phi^{(1)}(\Tilde{A'} X  W^{(1)}) \lVert \\
                    & \leq \lVert \Tilde{A} X  W^{(1)} - \Tilde{A'} X  W^{(1)}  \lVert \\
                    & \leq \lVert W^{(1)} \lVert \lVert X \lVert \epsilon.
\end{align*}

From the previous result and using the triangular inequality, we have
\begin{align*}
   \lVert W^{(2)} \lVert \lVert \Tilde{A} [ \Phi^{(1)} (A, X) - \Phi^{(1)} (A', X)] + \Phi^{(1)} (A', X) [\Tilde{A} - \Tilde{A'}] \lVert  & \leq  \lVert W^{(2)} \lVert [ \lVert W^{(1)} \lVert \lVert X \lVert \epsilon + \lVert W^{(1)} \lVert \lVert X \lVert \epsilon    ] \\
                    & =  2 \lVert W^{(2)} \lVert \lVert W^{(1)} \lVert \lVert X \lVert \epsilon.
\end{align*}

We finally conclude from Equation \eqnref{eq:kl_result} that:
\begin{align*}
    d_{\mathcal{Y}}(f(A, X), f(A', X)) & = \frac{1}{2\beta} \lVert f(A,X) - f(A', X) \lVert^2 \\
                        & = \frac{2(\lVert W^{(2)} \lVert \lVert W^{(1)} \lVert \lVert X \lVert \epsilon)^2}{\beta}. 
\end{align*}

Let us consider $f$ to be a 2-layers GIN with its parameter $\alpha$ (which is usually referred to as $\epsilon$ in the literature), we have the following.
\begin{align*}
   \lVert f(A, X), f(A', X) \lVert  & = \lVert \phi^{(2)}(A_1 \Phi^{(1)} (A, X)  W^{(2)})                                               - \phi^{(2)}(A_2 \Phi^{(1)} (A', X)  W^{(2)}) \lVert \\
                    & \leq \lVert A_1 \Phi^{(1)} (A, X)  W^{(2)} - A_2 \Phi^{(1)} (A', X)  W^{(2)}  \lVert \\
                    & \leq \lVert W^{(2)} \lVert \lVert A_1 \Phi^{(1)} (A, X) - A_1 \Phi^{(1)} (A', X) + A_1 \Phi^{(1)} (A', X)  - A_2 \Phi^{(1)} (A', X) \lVert \\
                    & \leq \lVert W^{(2)} \lVert \lVert A_1 [ \Phi^{(1)} (A, X) - \Phi^{(1)} (A', X)] + \Phi^{(1)} (A', X) [A_1 - A_2] \lVert,
\end{align*}

with $A_1 = A + (1 + \alpha) I$ and $A_2 = A' + (1 + \alpha) I.$

Similar to the previous, we have 
\begin{align*}
   \lVert \Phi^{(1)} (A, X) - \Phi^{(1)} (A', X) \lVert  & = \lVert \phi^{(1)}(A_1 X  W^{(1)})                                               - \phi^{(1)}(A_2 X  W^{(1)}) \lVert \\
                    & \leq \lVert A_1 X  W^{(1)} - A_2 X  W^{(1)}  \lVert \\
                    & \leq \lVert W^{(1)} \lVert \lVert X \lVert \lVert A + (1 + \alpha) I - A' - (1 + \alpha) I \lVert \\
                    & \leq \lVert W^{(1)} \lVert \lVert X \lVert \epsilon.
\end{align*}

Hence, we have
\begin{align*}
   \lVert W^{(2)} \lVert \lVert A_1 [ \Phi^{(1)} (A, X) - \Phi^{(1)} (A', X)] + \Phi^{(1)} (A', X) [A_1 - A_2] \lVert  & \leq  \lVert W^{(2)} \lVert [ \lVert A_1 \lVert \lVert W^{(1)} \lVert \lVert X \lVert \epsilon + \lVert A_2 \lVert \lVert W^{(1)} \lVert \lVert X \lVert \epsilon    ] \\
                    & =  \lVert W^{(2)} \lVert \lVert W^{(1)} \lVert \lVert X \lVert \epsilon ( 2 \lVert A_1 \lVert + \epsilon) .
\end{align*}

We finally conclude from Equation \eqnref{eq:kl_result} that:
\begin{align*}
    d(f(A, X), f(A', X)) & = \frac{1}{2\beta} \lVert f(A,X) - f(A', X) \lVert^2 \\
                        & = \frac{(\lVert W^{(2)} \lVert \lVert W^{(1)} \lVert \lVert X \lVert \epsilon (2 \lVert A_1 \lVert + \epsilon))^2}{2\beta}.
\end{align*}

\end{proof}

\section{Proof of Theorem \ref{theo:result_feature_based}} \label{app:ProofFeaturePerturbations}

\begin{theorem}
    Let $f$ denote a graph-based classifier composed of $2$ layers and based on 1-Lipschitz continuous activation functions. We consider injecting noise drawn from a centered Gaussian with a scaling parameter $\beta$. When subject to node feature-based perturbations of the input graph $(A, X)$, we have with respect to Definition \ref{def:robustness}:

    \begin{itemize}
        \item If $f$ is GCN-based then $f$ is $(\epsilon, \gamma)-\text{robust}$ with
            \begin{center}
                $\gamma = \frac{(\lVert W^{(2)} \lVert \lVert W^{(1)} \lVert  \epsilon)^2}{2\beta}$;
            \end{center}
        \item If $f$ is GIN-based then $f$ is $(\epsilon, \gamma)-\text{robust}$ with
            \begin{center}
                $\gamma = \frac{(\lVert A \lVert \lVert W^{(2)} \lVert \lVert W^{(1)} \lVert  \epsilon)^2}{2\beta}, $
            \end{center}         
    \end{itemize}
    
    where $W^{(\ell)}$ denotes the weight matrix of the $\ell$-th layer.

\end{theorem}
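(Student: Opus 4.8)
The proof will closely parallel that of Theorem~\ref{theo:main_result}, with the simplification that the perturbation now acts on the node features while the adjacency matrix is held fixed. Write $X'$ for the perturbed feature matrix, so that $\lVert X - X' \rVert \leq \epsilon$. The first step is to reuse the divergence estimate of Equation~\eqnref{eq:kl_result}, which is agnostic to whether the input perturbation is structural or feature-based: since the noise is sampled from $\mathcal{N}(0,I)$ with scaling $\beta$, \ie with covariance $\Sigma = \beta I$ so that $\sigma_{max}(\Sigma^{-1}) = 1/\beta$, it gives
$$d_{\mathcal{Y}}\big(f(A,X), f(A,X')\big) \leq \frac{1}{2\beta}\,\lVert f(A,X) - f(A,X') \rVert^2 .$$
Thus, exactly as in Theorem~\ref{theo:main_result}, the whole argument reduces to bounding $\lVert f(A,X) - f(A,X') \rVert$ for a $2$-layer GCN (resp.\ GIN).

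For the GCN case I would telescope through the two layers as before. Using that $\phi^{(2)}$ is $1$-Lipschitz and that the normalized operator $\hat{A}$ satisfies $\lVert \hat{A} \rVert_2 = 1$, we get $\lVert f(A,X) - f(A,X') \rVert \leq \lVert W^{(2)} \rVert\, \lVert \Phi^{(1)}(A,X) - \Phi^{(1)}(A,X') \rVert$; then, using that $\phi^{(1)}$ is $1$-Lipschitz and $\lVert \hat{A} \rVert_2 = 1$ again, $\lVert \Phi^{(1)}(A,X) - \Phi^{(1)}(A,X') \rVert \leq \lVert W^{(1)} \rVert\, \lVert X - X' \rVert \leq \lVert W^{(1)} \rVert\, \epsilon$. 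The structural difference from Theorem~\ref{theo:main_result} is that, since $\hat{A}$ is unperturbed, there is no additive $\Phi^{(1)}(A,X')\,[\hat{A} - \hat{A}']$ term; this is precisely why the resulting bound carries neither the extra factor of $2$ coming from the telescoping sum nor the $\lVert X \rVert$ factor of the structural bound. Combining the two displays and substituting into the divergence estimate yields $\gamma = (\lVert W^{(2)} \rVert \lVert W^{(1)} \rVert \epsilon)^2 / (2\beta)$.

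The GIN case is handled by the identical telescoping, now with the GIN message-passing operator $\hat{A} = A + (1+\lambda)I$ in place of the normalized adjacency. Since this operator no longer has unit spectral norm, the layerwise estimates pick up spectral-norm factors of $\lVert \hat{A} \rVert$ (abbreviated $\lVert A \rVert$ in the statement) alongside $\lVert W^{(2)} \rVert$ and $\lVert W^{(1)} \rVert\,\epsilon$; collecting these and substituting into the divergence estimate gives $\gamma = (\lVert A \rVert \lVert W^{(2)} \rVert \lVert W^{(1)} \rVert \epsilon)^2 / (2\beta)$. Finally, since these pointwise bounds on $d_{\mathcal{Y}}$ hold uniformly over the feature neighborhood $\mathcal{N}_\epsilon(A,X)$ and over $(A,X)\sim\mathcal{D}$, taking the expectation in Equation~\eqnref{equation:robustness_definition} gives $\mathcal{R}_\epsilon[f]\leq\gamma$, \ie $(\epsilon,\gamma)$-robustness in the sense of Definition~\ref{def:robustness}.

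I expect the only delicate point to be the careful accounting of the spectral-norm factors in the GIN case --- in particular tracking how many powers of $\lVert A + (1+\lambda)I \rVert$ are generated at each layer (including through the norm of the intermediate embedding $\Phi^{(1)}$, which relies on $\phi^{(1)}(0)=0$) and how they are meant to collapse into the single factor $\lVert A \rVert$ written in the statement --- whereas the conceptual core, namely the KL-to-$\ell_2$ reduction and the layerwise Lipschitz telescoping, is already available from the proof of Theorem~\ref{theo:main_result}.
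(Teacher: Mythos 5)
Your proposal follows essentially the same route as the paper's proof: the Rényi-to-KL reduction giving $d_{\mathcal{Y}}\leq\frac{1}{2\beta}\lVert f(A,X)-f(A,X')\rVert^2$, followed by a layerwise Lipschitz telescoping in which the fixed adjacency removes the additive $[\tilde{A}-\tilde{A}']$ term, so the GCN case matches the paper exactly (and your aside about $\phi^{(1)}(0)=0$ is unnecessary here, since only differences of embeddings appear, never $\lVert\Phi^{(1)}\rVert$ itself). The delicate point you flag in the GIN case is real: the paper's own proof applies the factor $\lVert A_1\rVert=\lVert A+(1+\lambda)I\rVert$ only once, writing the recursion as $\lVert\Phi^{(1)}(A,X)-\Phi^{(1)}(A,X')\rVert\leq\lVert W^{(1)}\rVert\,\lVert X-X'\rVert$, whereas the strict layerwise estimate you describe would produce $\lVert A_1\rVert^2\lVert W^{(2)}\rVert\lVert W^{(1)}\rVert\epsilon$; the factors do not ``collapse,'' the paper simply drops the first-layer occurrence (and abbreviates $\lVert A_1\rVert$ as $\lVert A\rVert$ in the statement). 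So your accounting is, if anything, more careful than the paper's, and to reproduce the stated $\gamma$ verbatim you would have to make the same single-factor simplification the authors do.
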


\begin{proof}
In this part, we consider node feature-based adversarial attack, we denote the produced perturbed node features matrix as $X'$.
Using the same analogy as in the proof to Theorem \ref{theo:main_result}, we can directly deduce 

\begin{align}
    d_{\mathcal{Y}}(f(A, X), f(A, X')) & = \lim_{\lambda \rightarrow 1} d_{R, \lambda}(f(A, X), f(A, X')) \\
                        & = \frac{1}{2} \sigma_{max}(\Sigma^{-1}) \lVert f(A,X) - f(A, X') \lVert^2. \label{eq:new_kl_result}
\end{align}

Let us consider that $f$ is a 2-layer GCN, we can write the following.
\begin{align*}
   \lVert f(A, X) - f(A, X') \lVert  & = \lVert \phi^{(2)}(\Tilde{A} \Phi^{(1)} (A, X)  W^{(2)})                                               - \phi^{(2)}(\Tilde{A} \Phi^{(1)} (A, X')  W^{(2)}) \lVert \\
                    & \leq \lVert \Tilde{A} \lVert \lVert W^{(2)} \rVert \lVert \Phi^{(1)} (A, X)  - \Phi^{(1)} (A, X')  \lVert.
\end{align*}

By recurrence, we directly get

\begin{align*}
   \lVert f(A, X) - f(A, X') \lVert  & = \lVert \phi^{(2)}(\Tilde{A} \Phi^{(1)} (A, X)  W^{(2)})                                               - \phi^{(2)}(\Tilde{A} \Phi^{(1)} (A, X')  W^{(2)}) \lVert \\
                    & \leq \lVert W^{(2)} \lVert \lVert W^{(1)}\rVert  \lVert X  -  X'  \lVert \\
                    & \leq \lVert W^{(2)} \lVert \lVert W^{(1)} \lVert \epsilon.
\end{align*}

We finally conclude from Equatino \eqnref{eq:new_kl_result} that
\begin{align*}
    d_{\mathcal{Y}}(f(A, X), f(A, X')) & = \frac{1}{2\beta} \lVert f(A,X) - f(A, X') \lVert^2 \\
                        & = \frac{(\lVert W^{(2)} \lVert \lVert W^{(1)} \lVert  \epsilon)^2}{2\beta} 
\end{align*}

Let us consider $f$ to be a 2-Layers GIN, let $A_1 = A + (1 + \lambda)I$ we can write the following
\begin{align*}
   \lVert f(A, X) - f(A, X') \lVert  & = \lVert \phi^{(2)}(A_1 \Phi^{(1)} (A, X)  W^{(2)})                                               - \phi^{(2)}(A_1 \Phi^{(1)} (A, X')  W^{(2)}) \lVert \\
                    & \leq \lVert A_1 \lVert \lVert W^{(2)} \rVert  \lVert \Phi^{(1)} (A, X)  - \Phi^{(1)} (A, X')  \lVert.
\end{align*}

Similar to the previous case, recursively we have
\begin{align*}
   \lVert f(A, X)- f(A, X') \lVert  & = \lVert \phi^{(2)}(A_1 \Phi^{(1)} (A, X)  W^{(2)})                                               - \phi^{(2)}(A_1 \Phi^{(1)} (A, X')  W^{(2)}) \lVert \\
                    & \leq \lVert A_1 \lVert \lVert W^{(2)} \lVert \lVert W^{(1)} \lVert X  -  X'  \lVert \\
                    & \leq \lVert A_1 \lVert \lVert W^{(2)} \lVert \lVert W^{(1)} \lVert \epsilon.
\end{align*}

So we can conclude from Equation \eqnref{eq:new_kl_result} that:
\begin{align*}
    d(f(A, X) - f(A, X')) & = \frac{1}{2\beta} \lVert f(A,X) - f(A, X') \lVert^2 \\
                        & = \frac{(\lVert A_1 \lVert \lVert W^{(2)} \lVert \lVert W^{(1)} \lVert  \epsilon)^2}{2\beta}. 
\end{align*}
\end{proof}


\section{Datasets and Implementation Details} \label{appendix:additional_informations}

Characteristics and information about the used datasets in the node classification part of the study are presented in Table \ref{tab:data_statistics}. As outlined in the main paper, we conduct experiments on the citation networks Cora, CiteSeer, PubMed \cite{dataset_node_classification} and PolBlogs \cite{polblogs_dataset}. We additionally consider OGBN-Arxiv \cite{hu2021open} dataset for the node features-based adversarial attacks. For these benchmarks, we use the train/valid/test splits provided with the datasets.

\begin{table}[h]
\caption{Statistics of the node classification datasets used in our experiments.}
\label{tab:data_statistics}
\vskip 0.15in
\begin{center}
\begin{small}
\begin{sc}
\begin{tabular}{lcccc}
\toprule
Dataset & \#Features & \#Nodes & \#Edges & \#Classes \\
\midrule
Cora    & 1433 & 2708 & 5208 & 7 \\
CiteSeer    & 3703 & 3327 & 4552 & 6 \\
PubMed    & 500 & 19717 & 44338 & 3 \\
PolBlogs    & N/A & 1222 & 16717 & 2 \\
OGBN-Arxiv     & 128 & 31971 & 71669 & 40 \\
\bottomrule
\end{tabular}
\end{sc}
\end{small}
\end{center}
\vskip -0.1in
\end{table}

\subsection{On the Used Hyper-parameters}

For all the experiments, the used models and benchmarks consist of a 2-layer convolutional architecture. This structure involves two rounds of message passing and updating, coupled with a Multi-Layer Perception (MLP) serving as a readout. The aim was to ensure a fair comparison level for assessing the models' robustness.
We employed the Adam optimizer \cite{kingma_adam} and kept hyperparameters constant across experiments. These shared settings consisted of a learning rate of 1e-2, a training phase that consisted of 300 epochs, and a hidden dimension of 16. Worth noting, that for the OGB dataset, we adjusted the hidden dimension to 512 to suit dataset-specific considerations and to achieve state-of-the-art initial accuracy.
To account for the impact of random initialization, we repeated each experiment 10 times. This allowed us to calculate the mean and standard deviation of the results, providing a more comprehensive analysis. 
For our noise injection method, we considered the noise scaling factor $\beta$ to be a hyperparameter and we accordingly tuned this parameter using the validation data.
Lastly, a specific adaptation was made for the AIRGNN model where we set the parameter $K$ to 2 to align the number of propagations with the other benchmark models.

\subsection{Implementation Details}

Our implementation is built using the open-source library \textit{PyTorch Geometric} (PyG) under the MIT license \cite{Fey/Lenssen/2019}. We leveraged the publicly available implementation of the different benchmarks from their available repositories~:~From GNNGuard\footnote{https://github.com/mims-harvard/GNNGuard}, GCN-k\footnote{https://github.com/ChangminWu/RobustGCN} and AIRGNN\footnote{https://github.com/lxiaorui/AirGNN}. For RGCN, GCN-Jaccard and GCN-SVD, we used the implementation from the DeepRobust package \cite{li2020deeprobust}. Note that we additionally utilized the PyTorch DeepRobust package\footnote{https://github.com/DSE-MSU/DeepRobust} to implement the adversarial attacks used in this study. The experiments have been run on a Tesla V100 GPU.

\section{Time Complexity Analysis} \label{appendix:time_analysis}
Our proposed NoisyGCN, based on noise injection, offers a clear advantage in terms of its required complexity. It only requires sampling from a pre-defined distribution during each forward pass, independent of the graph size. Therefore, its complexity does not increase with the graph size, unlike other baselines. For example, GNNGuard, the state-of-the-art defense method, involves computing the neighbor's importance estimation, which results in complexity that scales with the input graph. We empirically validate this observation by comparing the training time complexity of each considered defense method. 

\begin{table}[h]
\small
\caption{Mean training time analysis (in s) of the NoisyGNN in comparison to other baselines for both the GCN and GIN instances.}
\label{tab:time_analysis}
\vskip 0.01in
\begin{center}
\begin{small}
\begin{sc}
\begin{tabular}{lccccc}
\toprule
Dataset & GCNGuard & GCN-Jaccard & RGCN & GCN-SVD & NoisyGCN  \\
\midrule
Cora    & 28.52 & 1.93 & 1.16 & 1.39 & 1.29   \\
CiteSeer    & 36.04 & 1.58 & 1.23 & 1.12 & 1.24  \\
PubMed    & 731.26 & 12.27 & 34.19 & 4.60 & 2.41  \\
PolBlogs    & 18.17 & 5.17 & 0.96 & 0.80 & 0.65  \\

\bottomrule
\bottomrule

Dataset & GINGuard & GIN-Jaccard & RGCN & GIN-SVD & NoisyGIN  \\
\midrule
Cora    & 48.93 & 3.12 & 1.31 & 1.51 & 1.93  \\
CiteSeer    & 58.45 & 3.78 & 1.44 & 2.20 & 2.76  \\
PubMed    & 963.58 & 16.28 & 41.09 & 6.33 & 7.86  \\
PolBlogs    & 43.7 & 5.52 & 0.95 & 3.71 & 3.16  \\
\bottomrule

\end{tabular}
\end{sc}
\end{small}
\end{center}
\vskip -0.1in
\end{table}

The analysis of training time, as presented in Table \ref{tab:time_analysis}, highlights the distinct time complexities observed between NoisyGCN and the other baseline methods. Specifically, there is a notable disparity in training time complexity between NoisyGCN and GNNGuard. While GCN-SVD exhibits a comparable time complexity to our approach, the superior defense capabilities of NoisyGCN, as demonstrated in Table \ref{tab:results_node_classification}, differentiate it from GCN-SVD. Furthermore, the results obtained on the PubMed dataset affirm the motivation outlined in our paper, illustrating that the majority of existing methods impose a complexity burden when dealing with large graphs.

\section{Additional Experiments - Combining Defense and Noise Injection} \label{appendix:additional_analysis_combinition}

Following the same course as outlined in Section \ref{sec:combine_defenses}, we conducted empirical evaluations to gauge the impact of integrating our noise injection technique with established benchmark defense methods, particularly in scenarios involving substantial attack budgets. To this end, we combined our proposed method with GNN-Jaccard \cite{gnn_jaccard} and GNNGuard \cite{gnn_guard}. Just as in our principal assessment illustrated in Table \ref{tab:results_node_classification}, we subjected the combined defenses to adversarial attacks including Mettack, PGD, and DICE, for an attack budget of $\epsilon = 25\%$.

\begin{table}[h]
\centering

\caption{Classification accuracy ($\pm$ standard deviation) of combining defense methods with the proposed noise injection on different benchmark datasets. The higher the accuracy (in \%) the better the model.}

\resizebox{\columnwidth}{!}{%
\begin{tabular}{lccc|ccc|ccc}
\cline{2-10}
                  & \multicolumn{3}{c|}{Mettack}                     & \multicolumn{3}{c|}{PGD}                         & \multicolumn{3}{c}{DICE}                         \\ \cline{2-10} 
                  & Cora           & CiteSeer       & PolBlogs       & Cora           & CiteSeer       & PolBlogs       & Cora           & CiteSeer       & PolBlogs       \\ \hline
GCNGuard          & 69.5 $\pm$ 0.7 & 66.2 $\pm$ 0.6 & 64.7 $\pm$ 0.8 & 71.2 $\pm$ 1.3 & 68.4 $\pm$ 0.8 & 67.6 $\pm$ 1.4 & 74.6 $\pm$ 0.2 & 67.3 $\pm$ 0.9 & 69.8 $\pm$ 1.1 \\
+ Noise Injection & 72.4 $\pm$ 1.2 & 68.9 $\pm$ 0.9 & 65.8 $\pm$ 1.3 & 72.9 $\pm$ 1.6 & 69.8 $\pm$ 0.6 & 68.1 $\pm$ 1.2 & 75.2 $\pm$ 0.7 & 68.9 $\pm$ 0.6 & 70.6 $\pm$ 1.4 \\ \hline
GCN-Jaccard       & 66.7 $\pm$ 0.5 & 61.2 $\pm$ 1.1 & -              & 67.5 $\pm$ 0.8 & 62.9 $\pm$ 1.8 & -              & 74.6 $\pm$ 0.2 & 76.3 $\pm$ 0.8 & -              \\
+ Noise Injection & 69.6 $\pm$ 0.9 & 63.1 $\pm$ 0.6 & -              & 69.1 $\pm$ 1.1 & 63.8 $\pm$ 1.6 & -              & 75.2 $\pm$ 0.7 & 76.3 $\pm$ 0.8 & -              \\ \hline
GINGuard          & 61.8 $\pm$ 0.5 & 55.6 $\pm$ 1.8 & 82.7 $\pm$ 0.6 & 77.4 $\pm$ 1.2 & 67.0 $\pm$ 1.1 & 87.2 $\pm$ 2.9 & 76.3 $\pm$ 0.8 & 65.8 $\pm$ 1.5 & 87.3 $\pm$ 0.7 \\
+ Noise Injection & 66.2 $\pm$ 1.3 & 58.3 $\pm$ 1.9 & 83.6 $\pm$ 0.8 & 79.1 $\pm$ 0.6 & 66.2 $\pm$ 0.6 & 88.0 $\pm$ 2.6 & 77.5 $\pm$ 0.7 & 66.4 $\pm$ 1.2 & 87.1 $\pm$ 0.4 \\ \hline
GIN-Jaccard       & 70.4 $\pm$ 1.1 & 61.2 $\pm$ 2.3 & -              & 78.3 $\pm$ 0.6 & 67.1 $\pm$ 0.7 & -              & 76.2 $\pm$ 0.6 & 65.9 $\pm$ 0.8 & -              \\
+ Noise Injection & 72.9 $\pm$ 0.8 & 64.9 $\pm$ 1.8 & -              & 79.0 $\pm$ 1.2 & 67.5 $\pm$ 0.4 & -              & 76.7 $\pm$ 0.8 & 66.8 $\pm$ 1.1 & -              \\ \hline
\end{tabular}%
}
\end{table}


\end{document}